%% file: main.tex
\documentclass{article}

\usepackage{microtype}
\usepackage{graphicx}
\usepackage{subfigure}
\usepackage{adjustbox}
\usepackage{booktabs} % for professional tables

\usepackage{hyperref}

\usepackage[accepted]{icml2026}

\usepackage{amsmath}
\usepackage{amssymb}
\usepackage{mathtools}
\usepackage{amsthm}
\usepackage{bbm}
\usepackage{xcolor}
\usepackage{soul}
\usepackage{multirow}
\usepackage{pifont}

\definecolor{Gray}{gray}{0.95}
\definecolor{Green}{HTML}{A4E28E} % Green
\definecolor{LightGreen}{HTML}{E0F6DE} % LightGreen
\definecolor{Blue}{HTML}{BFC0FF} % Blue
\definecolor{LightBlue}{HTML}{E7E6FF} % LightBlue
\definecolor{DarkRed}{HTML}{C00000}
\definecolor{DarkGreen}{HTML}{3B7D23}

\usepackage[capitalize,noabbrev]{cleveref}

\input{math_commands.tex}

\theoremstyle{plain}
\newtheorem{theorem}{Theorem}[section]
\newtheorem{proposition}[theorem]{Proposition}

\newtheorem{corollary}[theorem]{Corollary}
\theoremstyle{definition}
\newtheorem{definition}[theorem]{Definition}

\theoremstyle{remark}
\newtheorem{remark}[theorem]{Remark}

\usepackage[textsize=tiny]{todonotes}

\NewDocumentCommand{\weixiang}
{ mO{} }{\textcolor{orange}{\textsuperscript{\textit{Weixiang}}\textsf{\textbf{\small[#1]}}}}
\NewDocumentCommand{\zehong}
{ mO{} }{\textcolor{cyan}{\textsuperscript{\textit{Zehong}}\textsf{\textbf{\small[#1]}}}}

\begin{document}

\twocolumn[
    \icmltitle{Temporal Graph Pattern Machine}

    % It is OKAY to include author information, even for blind
    % submissions: the style file will automatically remove it for you
    % unless you've provided the [accepted] option to the icml2024
    % package.

    % List of affiliations: The first argument should be a (short)
    % identifier you will use later to specify author affiliations
    % Academic affiliations should list Department, University, City, Region, Country
    % Industry affiliations should list Company, City, Region, Country

    % You can specify symbols, otherwise they are numbered in order.
    % Ideally, you should not use this facility. Affiliations will be numbered
    % in order of appearance and this is the preferred way.
    \icmlsetsymbol{equal}{*}

    \begin{icmlauthorlist}
        \icmlauthor{Yijun Ma}{nd,equal}
        \icmlauthor{Zehong Wang}{nd,equal}
        \icmlauthor{Weixiang Sun}{nd}
        \icmlauthor{Yanfang Ye}{nd}
    \end{icmlauthorlist}

    \icmlaffiliation{nd}{University of Notre Dame}
    % \icmlaffiliation{uconn}{University of Connecticut}

    \icmlcorrespondingauthor{Zehong Wang}{zwang43@nd.edu}
    \icmlcorrespondingauthor{Yanfang Ye}{yye7@nd.edu}

    % You may provide any keywords that you
    % find helpful for describing your paper; these are used to populate
    % the "keywords" metadata in the PDF but will not be shown in the document
    \icmlkeywords{Machine Learning, ICML}

    \vskip 0.3in
]

% this must go after the closing bracket ] following \twocolumn[ ...

% This command actually creates the footnote in the first column
% listing the affiliations and the copyright notice.
% The command takes one argument, which is text to display at the start of the footnote.
% The \icmlEqualContribution command is standard text for equal contribution.
% Remove it (just {}) if you do not need this facility.

\printAffiliationsAndNotice{}  % leave blank if no need to mention equal contribution
% \printAffiliationsAndNotice{\icmlEqualContribution} % otherwise use the standard text.

% \begin{figure*}[!t]
%     \centering
%     \includegraphics[width=\linewidth]{fig/example.pdf}
%     \caption{The overview framework of GPM.}
%     \label{fig:example}
% \end{figure*}

% \input{table/example}

% \citep{example}

\input{section/abstract}
\input{section/introduction}
\input{section/methodology_icml}
\input{section/experiment_icml}

\input{section/related_work}
\input{section/conclusion}
% \input{section/impact_statement}

% In the unusual situation where you want a paper to appear in the
% references without citing it in the main text, use \nocite
% \nocite{langley00}

\bibliography{citation}
\bibliographystyle{icml2026}

%%%%%%%%%%%%%%%%%%%%%%%%%%%%%%%%%%%%%%%%%%%%%%%%%%%%%%%%%%%%%%%%%%%%%%%%%%%%%%%
%%%%%%%%%%%%%%%%%%%%%%%%%%%%%%%%%%%%%%%%%%%%%%%%%%%%%%%%%%%%%%%%%%%%%%%%%%%%%%%
% APPENDIX
%%%%%%%%%%%%%%%%%%%%%%%%%%%%%%%%%%%%%%%%%%%%%%%%%%%%%%%%%%%%%%%%%%%%%%%%%%%%%%%
%%%%%%%%%%%%%%%%%%%%%%%%%%%%%%%%%%%%%%%%%%%%%%%%%%%%%%%%%%%%%%%%%%%%%%%%%%%%%%%

\newpage
\appendix
\onecolumn
\input{section/appendix}

\end{document}

%% file: math_commands.tex
\usepackage{amsmath,amsfonts,bm}

\def\eqref#1{equation~\ref{#1}}
\def\1{\bm{1}}

\DeclareMathAlphabet{\mathsfit}{\encodingdefault}{\sfdefault}{m}{sl}
\SetMathAlphabet{\mathsfit}{bold}{\encodingdefault}{\sfdefault}{bx}{n}

%% file: section/abstract.tex
\begin{abstract}
    Temporal graph learning is pivotal for deciphering dynamic systems, where the core challenge lies in explicitly modeling the underlying evolving patterns that govern network transformation. However, prevailing methods are predominantly task-centric and rely on restrictive assumptions---such as short-term dependency modeling, static neighborhood semantics, and retrospective time usage. These constraints hinder the discovery of transferable temporal evolution mechanisms. To address this, we propose the Temporal Graph Pattern Machine (TGPM), a foundation framework that shifts the focus toward directly learning generalized evolving patterns.
    TGPM conceptualizes each interaction as an \textit{interaction patch} synthesized via temporally-biased random walks, thereby capturing multi-scale structural semantics and long-range dependencies that extend beyond immediate neighborhoods. These patches are processed by a Transformer-based backbone designed to capture global temporal regularities while adapting to context-specific interaction dynamics. To further empower the model, we introduce a suite of self-supervised pre-training tasks---specifically masked token modeling and next-time prediction---to explicitly encode the fundamental laws of network evolution. Extensive experiments on temporal link prediction and temporal node classification show that TGPM consistently ranks among the top-performing methods, demonstrating exceptional cross-domain transferability. Our code has been released in \url{https://github.com/antman9914/TGPM}.
\end{abstract}

%% file: section/introduction.tex
\section{Introduction}
\label{sec:intro}

Real-world complex systems are inherently temporal: system states evolve continuously as events unfold, and future behaviors emerge as a consequence of historical dynamics~\cite{kossinets2006empirical,cai2024survey}. In such systems, events are neither independent nor random; instead, they are jointly shaped by historical context, interactions among entities, and temporal factors such as periodicity, burstiness, and other time-dependent conditions ~\cite{liguori2025131191}. As a result, temporal systems exhibit structured evolution patterns that depend on both historical trajectories and relational contexts ~\cite{holme201297}. 
Temporal graphs ~\cite{Michail03072016} provide a principled representation for modeling these systems by encoding entities as nodes and time-stamped interactions as temporal edges, thereby unifying relational structure with temporal dynamics. Modeling how temporal graphs evolve, and how future events are generated from historical and structural context, has consequently become a central problem in machine learning and data mining~\cite{longa2023graph}.

Although temporal graphs provide a principled modeling method for temporal systems, learning from them remains fundamentally challenging. The difficulty arises not primarily from data scale \cite{Yanping2025196323} or computational complexity \cite{wang2024tgl}, but from uncovering the underlying mechanisms that govern how temporal systems evolve~\cite{pan2025crossdyg}. 
To accurately understand and predict future behaviors in a temporal environment, models must move beyond optimizing predictive performance for individual downstream tasks and instead capture how events are generated through the interplay of historical context, relational structure, and temporal dynamics~\cite{huang2025crosslink}. From this perspective, temporal graph learning can be viewed as a form of \textit{mechanism learning}, whose objective is to infer the latent processes driving system evolution rather than to fit task-specific correlations.

Despite their practical effectiveness, most existing temporal graph learning methods typically rely on several modeling assumptions that are fundamentally misaligned with the objective of this mechanism.
\textit{(1) Static neighborhood semantics assumption.}
Most methods model temporal graph evolution primarily through one-hop neighborhood dynamics~\cite{yu2023towards,wu2024feasibility,tian2024freedyg}
% , which are typically encoded using temporal encodings and variations in sampled interactions
. However, neighborhood semantics in temporal systems are inherently non-stationary ~\cite{layne2023role}: the functional role and behavioral patterns of the same neighbor may change substantially as the relational structure evolves. Compressing such semantic dynamics into fixed temporal embeddings and sampling procedures therefore constrains a model’s ability to learn evolution patterns~\cite{feng2026survey,wang2021time}
that generalize across different graph instances and temporal regimes.
\textit{(2) Short-term dependency assumption.}
Many approaches assume that recent neighbor interactions are sufficient to characterize a node’s current state, and consequently restrict representation learning to local time windows or recent-neighbor sampling~\cite{wanginductive,xu2020Inductive,congwe,yu2023towards}. Although computationally efficient, this design choice systematically biases models toward short-term signals ~\cite{wang2021time} and limits their ability to capture long-term dependencies that often govern the evolution of real-world systems.
\textit{(3) Retrospective temporal modeling assumption.}
In most existing methods, temporal modeling is predominantly retrospective~\cite{yu2023towards,wu2024feasibility}, which means that temporal information is incorporated only through post-hoc conditioning on past events, where temporal signals are treated as auxiliary annotations of historical events. Such temporal annotations are exploited via time decay~\cite{nguyen2018continuous} and attention weighting~\cite{xu2020Inductive} to modulate the influence of past events, but they are not treated as an explicit modeling target.
Consequently, while these models can identify which historical events are relevant, they often fail to accurately characterize when future events will occur or to capture the temporal structures underlying system evolution.
Together, these assumptions suggest that the central challenge of temporal graph learning lies not in improving performance on individual prediction tasks, but in understanding the underlying processes by which temporal systems evolve~\cite{huang2025crosslink,pan2025crossdyg}.

To overcome these challenges, effective temporal graph modeling requires explicitly learning the evolving patterns that govern system dynamics. Such patterns correspond to generalizable generative mechanisms that characterize how historical dependencies, structural conditions, and temporal regularities jointly shape future events \cite{huang2025crosslink}. Importantly, these evolution patterns are not tied to specific prediction tasks or individual graph instances; instead, they capture system-level regularities of temporal dynamics that recur across different domains and time horizons.

Motivated by these insights, we propose \textbf{Temporal Graph Pattern Machine (TGPM)}, a pattern-centric modeling framework that explicitly treats evolving patterns as the primary modeling target. Instead of learning task-specific representations tied to individual nodes or events, TGPM aims to capture generalizable temporal evolution mechanisms from temporal graphs.
To this end, we design a modeling paradigm in which temporal contexts are represented as sequences of evolving patterns that jointly encode structural relationships, historical dependencies, and temporal information (C1). TGPM is trained using self-supervised objectives that encourage the model to capture temporal dependencies at multiple time scales (C2) and to explicitly reason about future event timing (C3). As a result, the learned representations support robust generalization across different tasks, time periods, and graph domains.

Our main contributions are summarized as follows:
(1) \textbf{Pattern-centric modeling paradigm.} We propose a pattern-centric perspective for temporal graph learning that explicitly targets the modeling of evolving patterns governing system dynamics. This paradigm aims to capture generalizable temporal evolution regularities across different graphs, tasks, and time periods; (2) \textbf{Temporal Graph Pattern Machine (TGPM).} We introduce TGPM, a new temporal graph modeling framework that represents temporal contexts as sequences of interaction patterns constructed via temporally biased random walks and encodes them using a Transformer-based architecture. TGPM is trained with self-supervised objectives that jointly model multi-scale temporal dependencies and future event timing, enabling the learning of generalizable representations; (3) \textbf{Extensive empirical evaluation.} We conduct comprehensive experiments on temporal link prediction and temporal node classification over multiple real-world temporal graph benchmarks. The results demonstrate that TGPM consistently achieves the best overall average ranking and exhibits robust generalization across different scenarios.
% \begin{itemize}
%     \item \textbf{Pattern-centric modeling paradigm.} We propose a pattern-centric perspective for temporal graph learning that explicitly targets the modeling of evolving patterns governing system dynamics. This paradigm aims to capture generalizable temporal evolution regularities across different graphs, tasks, and time periods.
    
%     \item \textbf{Temporal Graph Pattern Machine (TGPM).} We introduce TGPM, a new temporal graph modeling framework that represents temporal contexts as sequences of interaction patterns constructed via temporally biased random walks and encodes them using a Transformer-based architecture. TGPM is trained with self-supervised objectives that jointly model multi-scale temporal dependencies and future event timing, enabling the learning of generalizable representations.
    
%     \item \textbf{Extensive empirical evaluation.} We conduct comprehensive experiments on temporal link prediction and temporal node classification over multiple real-world temporal graph benchmarks. The results demonstrate that TGPM consistently outperforms strong baselines and exhibits robust generalization across different scenarios.
% \end{itemize}

%% file: section/methodology_icml.tex
\begin{figure*}[!t]
    \centering
    \includegraphics[width=\linewidth]{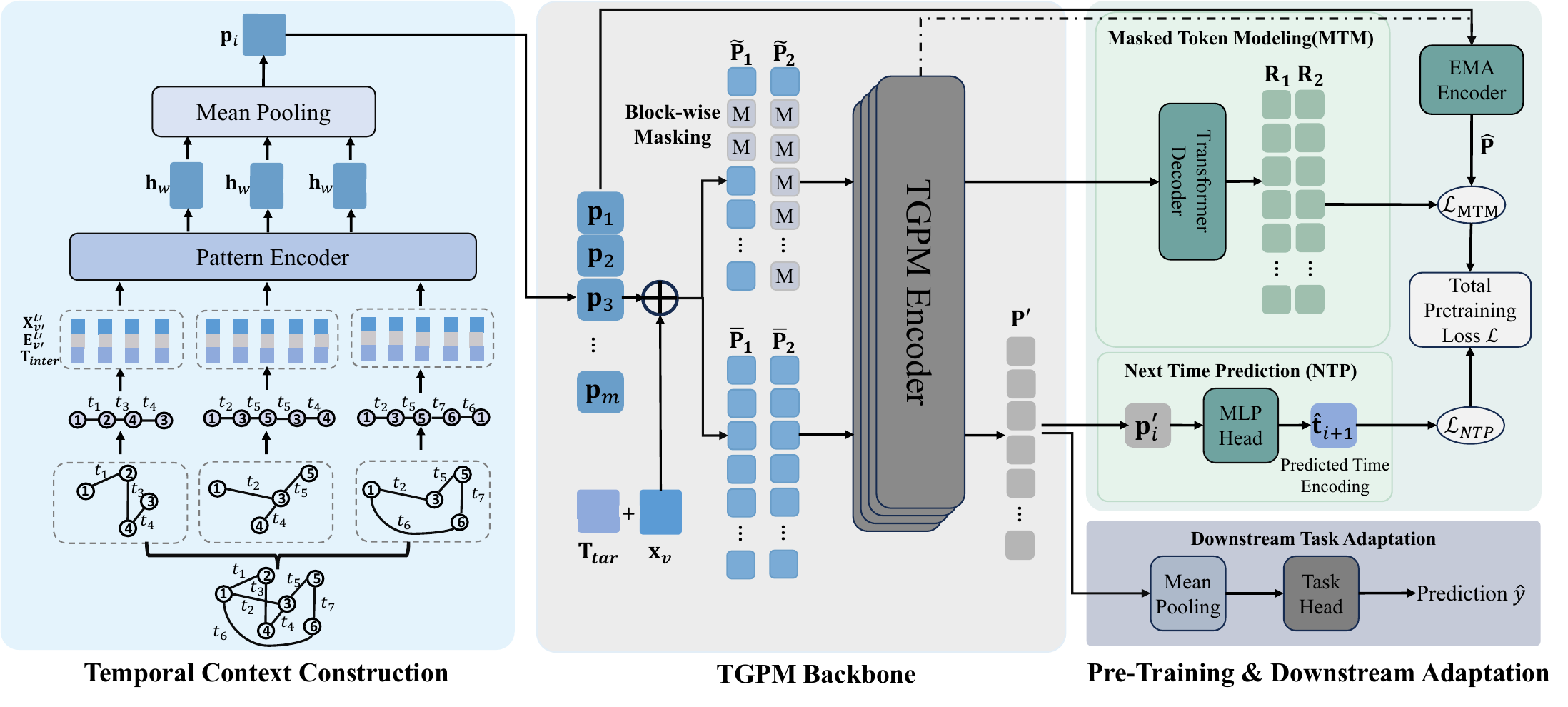}
    \caption{\textbf{Overview of TGPM}. (a) Temporal context of an interaction is represented by an interaction patch aggregated from a set of temporally biased random walks. (b) Transformer-based TGPM encoder adapt local structural and temporal semantics to target-specific temporal context, capturing global temporal regularities. (c) MTM and NTP pre-training enforce TGPM to learn from multi-scale temporal dynamics and the temporal rhythms of evolving patterns. }
    \label{fig:model}
    \vspace{-10pt}
\end{figure*}

\section{Temporal Graph Pattern Machine}

A temporal graph is denoted as $\mathcal{G} = (\mathcal{V}, \mathcal{E}, \mathbf{X}, \mathbf{E})$, where $\mathcal{V}$ is the set of nodes with $|\mathcal{V}| = N$, and $\mathcal{E}$ is a sequence of chronologically ordered interactions $\{(u_1, v_1, t_1), (u_2, v_2, t_2), \ldots, (u_E, v_E, t_E)\}$ satisfying $0 \le t_1 \le t_2 \le \ldots \le t_E$. For each interaction, $u_i, v_i \in \mathcal{V}$ denote the source and destination nodes at timestamp $t_i$. $\mathbf{X}$ and $\mathbf{E}$ are the node and edge feature matrices, where each node $v \in \mathcal{V}$ is associated with a feature vector $\mathbf{x} \in \mathbb{R}^{d_n}$ and each edge $e \in \mathcal{E}$ with a feature vector $\mathbf{e} \in \mathbb{R}^{d_e}$.

As illustrated in Figure~\ref{fig:model}, TGPM models temporal contexts as sequences of \emph{interaction patches}, each derived from evolving local substructures to capture dynamic neighborhood semantics. These interaction patches are then encoded and contextualized to form temporal representations, which are further optimized through self-supervised pre-training objectives, including masked token modeling and next time prediction, to capture multi-scale temporal dependencies and future event dynamics.

\subsection{Temporal Context Representation}
\label{sec:context}

In common practice of temporal context construction~\cite{rossi2020temporal,yu2023towards}
, given a target timestamp $t$, the temporal context of a target node $v$ is represented as a sequence of first-hop interactions, denoted by
$\mathcal{S}_v^t=\{(v, v', t') \mid t'<t\}$.
Such representations are inherently node-centric and rely on temporal encodings to summarize past interactions. However, the semantic content of an interaction is not solely determined by the participating nodes and timestamps, but also by the evolving structural context in which it occurs \cite{layne2023role}. Compressing this information into node-level temporal features therefore leads to substantial loss of structural and semantic dynamics.

To overcome static neighborhood semantics assumption, we propose a pattern-centric modeling paradigm in which each interaction is represented by an \emph{interaction patch}. Considering that temporal substructures can naturally reflect how local graph substructures and historical interactions co-evolve over time, we design interaction patches to summarize the evolving structural and temporal context associated with a specific interaction by aggregating multiple temporally contextualized substructures.
Individual substructure captures only partial view of the evolving semantics, while aggregating multiple complementary substructures into an interaction patch can provide more stable and semantically meaningful representations of how interaction semantics evolve over time.
Interaction patch serves as the basic semantic unit for modeling temporal contexts in TGPM and enables the representation of interactions beyond simple node-level histories.

\noindent\textbf{Sampling Substructures via Temporal Random Walks.}
A key challenge in constructing interaction patches lies in how to systematically tokenize evolving relational structures into sequences of substructures. Direct enumeration or matching of subgraphs is computationally prohibitive~\cite{sun2012match,zhao2025controllable} and requires a predefined substructure vocabulary~\cite{wang2024gft}. 
However, a suitable tokenization mechanism should remain efficient and scalable as temporal graphs evolve, and meanwhile preserve diverse substructures and temporal dependencies. Temporal random walk sampling naturally satisfies these principles by approximating substructure matching through scalable time-aware stochastic traversal. As a result, we propose to construct interaction patches using \emph{temporally biased random walks}.
These walks provide a scalable approximation to atomic evolution patterns while bypassing the need for explicit substructure enumeration \cite{nguyen2018continuous}. Moreover, they naturally encode both historical dependencies and structural relations within a unified sequential representation.

For each interaction $(v, v', t')$, we generate a set of contextualized walks rooted at node $v'$. To avoid data leakage, ensure pattern diversity, and preserve contextual dependency, the sampling process follows two principles:
(1) \emph{Retrospective sampling}: all sampled edges are considered as undirected edges and must have timestamps earlier than $t'$; and
(2) \emph{Temporal recency bias}: edges closer to $t'$ are assigned higher sampling priority, following the time-decay assumption widely adopted in dynamic processes~\cite{hawkes1971spectra,kalman1960new}.
Unlike traditional chronologically monotonic temporal walks~\cite{wanginductive,nguyen2018continuous}, we relax strict causal ordering and allow more complex temporal dependencies.

Formally, let $\mathcal{T}: \mathcal{E} \rightarrow \mathbb{R}^+$ denote the edge timestamp mapping function. Given an anchor interaction $(v, v', t')$, a temporally biased random walk $w$ of length $L$ is defined as a node sequence
$w=(v_0,v_1,\dots,v_L)$ with $v_0=v'$,
generated by the following Markov process:
\begin{equation}
    \eta(u,v) =
 \begin{cases}
 \exp(t'-\mathcal{T}(u,v)), & (u,v)\in\mathcal{E},\ \mathcal{T}(u,v)<t' \\
     0,                         & \text{otherwise}
  \end{cases} 
\end{equation}
\begin{equation}
    P(v_{i+1}\mid v_0, v_1, \dots, v_i) = \frac{\eta(v_i, v_{i+1})}{\sum_{u\in \mathcal{V}}\eta(v_i, u)}.
\end{equation}
This transition rule prioritizes temporally close edges while allowing non-monotonic traversal of historical interactions.

\begin{proposition}
\label{prop:causal_walk}
Given a temporal random walk $w=(v_0,v_1,\dots,v_L)$ with $\mathcal{T}(v_i,v_{i+1})<t', i=0,1,\dots,L-1$ and a causal walk $w_c = (u_0,u_1,\dots,u_L)$ with $\mathcal{T}(u_{i-1},u_i) < \mathcal{T}(u_i, u_{i+1}) < t', i=1,2,\dots,L-1$, $w$ 
can capture more complex evolving semantics than $w_c$.
\end{proposition}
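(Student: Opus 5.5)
The plan is to formalize ``can capture more complex evolving semantics'' as a strict inclusion of the sets of realizable walks, together with the probability distributions over them that the sampler can produce. For a fixed anchor time $t'$ and length $L$, let $\mathcal{W}(t')$ be the set of node sequences $(v_0,\dots,v_L)$ in which every consecutive pair is an edge with timestamp earlier than $t'$. These are exactly the walks with positive probability under the transition rule defined by $\eta$. Let $\mathcal{W}_c(t')\subseteq\mathcal{W}(t')$ be the subset of walks whose timestamps are strictly increasing, as in the definition of $w_c$. I will take ``the semantics a walk family can capture'' to mean the set of temporal orderings (order patterns) of edge timestamps that occur along its walks. I then show that the orderings available to $\mathcal{W}(t')$ strictly contain those available to $\mathcal{W}_c(t')$.

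First I establish the inclusion $\mathcal{W}_c(t')\subseteq\mathcal{W}(t')$. A causal walk satisfies $\mathcal{T}(u_{i-1},u_i)<\mathcal{T}(u_i,u_{i+1})<t'$ for every $i$. In particular each of its edges has timestamp below $t'$, so every factor $\eta(u_i,u_{i+1})>0$ and the walk has positive probability under our Markov process.

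Second, I show the sampler can concentrate on causal walks. Restricting or conditioning the sampler to the event ``timestamps increasing'' recovers the causal-walk distribution, up to the recency weighting. Hence anything expressible by a distribution over causal walks is expressible within our family. This is the ``at least as expressive'' half.

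Third, I prove strictness with an explicit construction. Take nodes $a,b,c$ with edges $(a,b)$ at time $2$ and $(b,c)$ at time $1$, and $t'=3$. The walk $(a,b,c)$ lies in $\mathcal{W}(t')$ but not in $\mathcal{W}_c(t')$, because its timestamps decrease. Likewise, the walk $(b,a,b)$ reuses the same edge and so has equal timestamps, which a strictly causal walk also forbids. These walks encode temporal patterns that no causal walk realizes: decreasing or backward-in-time influence (a later edge followed by an earlier one), repeated or bursty re-interaction on the same edge, and bidirectional traversal of undirected edges. In the construction's graph, no causal walk of length $2$ starting at $a$ exists at all, so the causal family is empty there while ours is not. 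This gives strict inclusion both at the level of walk sets and at the level of realizable timestamp order patterns. More generally, causal walks realize only the single order pattern ``increasing'' among the $L$ edge timestamps, whereas $\mathcal{W}(t')$ can realize any pattern that the graph supports.

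The main obstacle is the first step's modeling choice: ``more complex evolving semantics'' is informal. The argument is only as strong as the chosen proxy, namely the realizable walk sets and timestamp order patterns. I would state this interpretation explicitly, and add a remark that any semantics functional defined on walks inherits the inclusion. The remaining steps are routine.
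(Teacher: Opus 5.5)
Your proposal is correct and follows the same route as the paper. The paper proves (i) the inclusion $\mathcal{W}_{\text{mono}}^{t'}(v')\subseteq\mathcal{W}_{\text{non}}^{t'}(v')$ directly from the definitions, (ii) strictness through a small explicit temporal graph, and (iii) a characterization of the extra walks as exactly those with positive temporal disorder, which is your ``order pattern'' argument. Your counterexample $(a,b,c)$, with timestamps $2$ then $1$, works under the plain definition and is cleaner than the paper's, which argues strictness through an extra ``anchor-aware'' monotonicity. Your Step~2 on conditioning the sampler is unnecessary, and it is only heuristic at the level of distributions; the support-level inclusion of Step~1 already gives the ``at least as expressive'' half.
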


It is proved in Appendix \ref{sec:prop1} that temporally non-monotonic random walks are more expressive than causally monotonic walks, able to encode temporal structures that are fundamentally inaccessible under causal constraints. This justify Proposition \ref{prop:causal_walk} and our proposed sampling principles.

\noindent\textbf{Constructing Patch Embeddings via Aggregating Substructures.}
For each interaction $(v, v', t')$, we sample $k$ temporally biased random walks (i.e., substructures) to construct its contextualized evolution patterns. For a walk $w$, we retrieve the corresponding edge sequence
$w_e=((v, v'), (v', v_1), \dots, (v_{L-1}, v_L))$
and timestamp sequence
$w_t=(t', \mathcal{T}(v', v_1), \dots, \mathcal{T}(v_{L-1}, v_L))$.
Based on $w$ and $w_e$, we obtain the node and edge feature matrices
$\mathbf{X}_{v'}^{t'}\in\mathbb{R}^{L\times d_n}$ and
$\mathbf{E}_{v'}^{t'}\in\mathbb{R}^{L\times d_e}$ from $\mathbf{X}$ and $\mathbf{E}$, respectively.
To encode temporal information, we adopt sinusoidal time-interval embeddings~\cite{xu2020Inductive,yu2023towards}. Using $t'$ as the anchor time, each timestamp $t_i$ in $w_t$ is converted to an interaction-relative interval $\Delta t=t'-t_i$ and encoded by time encoder $T_{enc}$ as
\begin{equation}
\begin{aligned}
T_{enc}(\Delta t)
&= \sqrt{\frac{1}{d_t}}
\bigl[
\cos(\omega_1 \Delta t), \sin(\omega_1 \Delta t), \dots, \\
&\qquad \cos(\omega_{d_t}\Delta t), \sin(\omega_{d_t}\Delta t)
\bigr],
\end{aligned}
\end{equation}
where $d_t$ is the encoding dimension and $\{\omega_i\}$ are trainable parameters.
The resulting matrix is denoted by $\mathbf{T}_{inter} \in \mathbb{R}^{L \times d_t}$. 
The concatenated representation
$\bar{\mathbf{H}}_w=[\mathbf{X}_{v'}^{t'} \| \mathbf{E}_{v'}^{t'} \| \mathbf{T}_{inter}]$
is projected via a linear layer and encoded by a Transformer encoder $f$:
\begin{equation}
    \mathbf{h}_w = f(\mathbf{H}_w), \quad \mathbf{H}_w = \mathbf{W}\bar{\mathbf{H}}_w + \mathbf{b},
\end{equation}
where $\mathbf{h}_w$ denotes the pattern embedding of walk $w$.

Finally, the interaction patch embedding $\mathbf{p}$ for $(v,v',t')$ is obtained by aggregating the embeddings of its $k$ sampled walks via mean pooling:
$\mathbf{p}=\frac{1}{k}\sum_{w}\mathbf{h}_w$.
This interaction patch serves as a pattern-centric representation that summarizes the evolving structural and temporal semantics of the interaction and forms the basic input token for the TGPM backbone encoder.

\subsection{TGPM Backbone}

Given a target node $v$ at time $t$, its temporal context is represented as a sequence of interaction patch embeddings
$\mathbf{P}=[\mathbf{p}_1,\mathbf{p}_2,\dots,\mathbf{p}_m]$,
where $m=|\mathcal{S}_v^t|$ and each $\mathbf{p}_i$ corresponds to the interaction patch of $(v,v'_i,t'_i)\in\mathcal{S}_v^t$.
While interaction patches capture local structural and temporal semantics, they are invariant to absolute time shifts and lack target-specific temporal localization.
To contextualize these patches, we adopt a Transformer backbone that models their temporal dependencies and relevance to the target node.
We construct an \emph{interaction token sequence}
\[
\bar{\mathbf{P}}=[\mathbf{P}\,\|\,\mathbf{T}_{tar}\,\|\,\mathbf{x}_v],
\]
where $\mathbf{T}_{tar}\in\mathbb{R}^{m\times d_t}$ denotes the target-relative time interval encodings derived by $T_{enc}(\cdot)$ with $\Delta t=t-t'_i$ as input, and $\mathbf{x}_v$ is the feature vector of node $v$. 

Given $\bar{\mathbf{P}}$, a Transformer layer computes
\begin{equation}
    \mathbf{Q}=\bar{\mathbf{P}}\mathbf{W}_Q,\quad
    \mathbf{K}=\bar{\mathbf{P}}\mathbf{W}_K,\quad
    \mathbf{V}=\bar{\mathbf{P}}\mathbf{W}_V,
\end{equation}
\begin{equation}
    \mathrm{Attn}(\bar{\mathbf{P}})=\mathrm{softmax}\!\left(\frac{\mathbf{Q}\mathbf{K}^T}{\sqrt{d_{out}}}\right)\mathbf{V},
\end{equation}
\begin{equation}
    \mathbf{P}'=\mathrm{FFN}\big(\bar{\mathbf{P}}+\mathrm{Attn}(\bar{\mathbf{P}})\big),
\end{equation}
where $\mathbf{W}_Q,\mathbf{W}_K,\mathbf{W}_V$ are trainable matrices and $d_{out}$ is the output dimension.
We employ multi-head attention~\cite{vaswani2017attention} and stack multiple layers.
The output of the final layer is
\(
\mathbf{P}'=[\mathbf{p}'_1,\mathbf{p}'_2,\dots,\mathbf{p}'_m],
\)
which represents the contextual embeddings of interaction patches and is used for pre-training and downstream tasks.

\subsection{TGPM Pre-Training}

Although the proposed pattern-centric representation alleviates the limitations of node-centric temporal modeling, naive task-centric training still tends to over-emphasize short-term correlations and treats time only implicitly. This behavior hinders the learning of generalizable temporal evolution mechanisms. To address these issues, we adopt self-supervised pre-training to decouple representation learning from downstream supervision and to explicitly model temporal dependencies at multiple time scales.
Specifically, we design two complementary objectives:
(1) \emph{Masked Token Modeling (MTM)} to capture semantic dependencies among interaction patterns across different temporal ranges and go beyond short-term correlations;
(2) \emph{Next Time Prediction (NTP)} to explicitly model when future interactions occur and model temporal dynamics from both retrospective and prospective perspectives.
Together, they encourage the model to learn both \emph{what} evolves and \emph{when} it evolves.

\noindent\textbf{Masked Token Modeling (MTM) for Learning Long-/Short-Horizon Dependency.}
Given an interaction token sequence $\bar{\mathbf{P}}=[\bar{\mathbf{p}}_1,\bar{\mathbf{p}}_2,\dots,\bar{\mathbf{p}}_m]$, MTM randomly masks blocks of consecutive tokens and trains the model to reconstruct the masked tokens from the visible context. This follows the conditional factorization principle of masked language modeling
$p(\bar{\mathbf{p}}_1,\dots,\bar{\mathbf{p}}_m)
=
p(\bar{\mathbf{p}}_{/M})\prod_{i\in M} p(\bar{\mathbf{p}}_i \mid \bar{\mathbf{p}}_{/M})$~\cite{devlin2019bert},
where $M$ denotes the masked positions and $\bar{\mathbf{p}}_{/M}$ the visible tokens.

To encourage dependency learning at multiple temporal scales (e.g., short-term or long-term), we adopt block-wise masking. Given a masking budget $n_m$ and the number of blocks $b\ge1$, we sample $b$ non-overlapping blocks of size $\lceil n_m / b \rceil$. Larger blocks require long-range dependency modeling, while smaller blocks emphasize short-term dependency modeling. Formally, we define the set of masked position $\mathcal{M}$ as
\begin{equation}
    \mathcal{M}=\bigcup\limits_{i=1}^b\{s_i,s_i+1,\dots,s_i+\lceil n_m / b \rceil - 1\}
\end{equation}
where $s_i$ is the index of $\bar{\mathbf{p}}_i$. To define reconstruction targets, we employ an exponential moving average encoder $f_{\mathrm{EMA}}$ to generate stable semantic targets
$\hat{\mathbf{p}}_i = f_{\mathrm{EMA}}(\{w \mid v_0=v'_i\})$.
The visible tokens are augmented with learnable mask tokens to form $\tilde{\mathbf{P}}$, which is passed through the TGPM encoder and a Transformer decoder to obtain reconstructed token $\mathbf{r}_i$ in SimMIM fashion~\cite{xie2022simmim}.
The loss for block size $b$ is defined as
\begin{equation}
\mathcal{L}_{\mathrm{MTM}}(b)
=
\frac{1}{m}
\sum_{i=1}^{m}
\mathrm{is\_masked}(\bar{\mathbf{p}}_i)
\cdot
\|\mathbf{r}_i - \mathrm{sg}[\hat{\mathbf{p}}_i]\|_2^2 ,
\end{equation}
where $\mathrm{is\_masked}(\cdot)$ is an indicator function for masked position, and $\mathrm{sg}[\cdot]$ denotes the stop-gradient operator. The final MTM objective aggregates losses over a predefined set of block sizes $B$:
\begin{equation}
\mathcal{L}_{\mathrm{MTM}} = \sum_{b\in B} \mathcal{L}_{\mathrm{MTM}}(b).
\end{equation}

\begin{proposition}
    \label{prop:mtm} 
    Under block-wise masked token modeling with masking block size $b$, the optimal reconstruction must exploit contextual dependencies beyond certain dependency range $r$.
\end{proposition}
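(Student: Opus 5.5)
The first step is to make the statement precise. Let $\ell=\lceil n_m/b\rceil$ be the length of each masked block. For a masked position $i$ in block $\{s_j,\dots,s_j+\ell-1\}$, the nearest visible token lies at index distance
\[
d(i)=\min\{\,i-s_j+1,\; s_j+\ell-i\,\}.
\]
Taking the maximum over the block, the central positions satisfy $d(i)\ge \lceil \ell/2\rceil$. I would therefore set $r=\lceil \ell/2\rceil-1$. This depends only on the block length, and through it on the masking budget and on $b$.

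A context dependency has range at most $r$ if the predictor for position $i$ is measurable with respect to visible tokens within distance $r$ of $i$. Because the loss is squared error against the stop-gradient target $\hat{\mathbf{p}}_i$, the optimal reconstruction is the conditional expectation
\[
\mathbf{r}_i^\star=\mathbb{E}[\hat{\mathbf{p}}_i\mid \bar{\mathbf{p}}_{/\mathcal{M}}].
\]
This is the standard $L^2$ projection fact, and it is the backbone of the argument.

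The key step is an information-set argument. For a central masked position $i$, the set of visible tokens within distance $r$ of $i$ is empty, since every index within that distance is itself masked. Any range-$r$ predictor is then measurable with respect to the trivial $\sigma$-algebra, so it is constant and at best equals $\mathbb{E}[\hat{\mathbf{p}}_i]$.

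Comparing risks gives
\[
\mathbb{E}\|\hat{\mathbf{p}}_i-\mathbb{E}\hat{\mathbf{p}}_i\|^2-\mathbb{E}\|\hat{\mathbf{p}}_i-\mathbf{r}_i^\star\|^2=\mathbb{E}\|\mathbf{r}_i^\star-\mathbb{E}\hat{\mathbf{p}}_i\|^2 .
\]
This gap is strictly positive unless $\hat{\mathbf{p}}_i$ is mean-independent of the whole visible context. Under that nondegeneracy, any range-$r$ predictor is strictly suboptimal, so the optimal reconstruction must use tokens at distance greater than $r$. Summing over positions and over $b\in B$ carries the result to $\mathcal{L}_{\mathrm{MTM}}$.

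I would also remark on monotonicity. Fewer, larger blocks (small $b$ at fixed $n_m$) increase $\ell$ and hence $r$. This matches the paper's claim that large blocks force long-range modeling and small blocks emphasize short-range modeling.

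The main obstacle is the nondegeneracy hypothesis, because the statement is false without some dependence assumption. If tokens were i.i.d., the optimal predictor would be a constant and use no context at all. I would therefore state explicitly the assumption that $\hat{\mathbf{p}}_i$ is not mean-independent of the visible tokens. A natural sufficient condition is a Markov-type or stationary-mixing model on the token sequence with nonvanishing dependence at lag $\lceil \ell/2\rceil$. I would need to check that the EMA target, being a deterministic function of $\bar{\mathbf{p}}_i$'s walks, inherits this dependence.

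A secondary subtlety is the boundary case where a block touches the sequence end. There the nearest visible token is on one side only, so the distance is even larger and the bound still holds.
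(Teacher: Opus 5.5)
Your argument is correct, given the nondegeneracy hypothesis you state explicitly. The paper needs that hypothesis too but leaves it implicit. Your route is genuinely different from the paper's.

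\textbf{The paper's route.} The paper argues information-theoretically. It introduces $r$-Markov and multi-scale dependency, the predictive information $\mathcal{I}(M_b)$, its $r$-local version restricted to $\mathcal{N}_r(M_b)$, and the minimal reconstruction range $R^*(M_b)$. Part (i) of Proposition~\ref{prop:block_info} (there $b$ is the block length, i.e.\ your $\ell$) looks at the central token of the block and concludes $R^*(M_b)\ge b/2$. It then asserts $R^*(M_b)\ge b$ without further argument. Parts (ii)--(iii) give a heuristic scale-mismatch analysis that motivates using a set $B$ of block sizes.

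\textbf{Your route.} You use the fact that the MTM loss is a squared error against a stop-gradient target, so the optimal reconstruction is a conditional expectation. You measure locality from the token rather than from the block. A strict risk gap then follows from the Pythagorean identity.

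\textbf{What each buys.} Your route gives rigor and ties the result directly to the actual objective. Your conclusion, that the central token's optimal predictor must use tokens at distance at least $\lceil \ell/2\rceil$, is exactly the paper's $b/2$ intuition made sound. The paper's block-based notion of range does not support its stronger bound. Because $\mathcal{N}_r(M_b)$ is measured from the block, in a first-order Markov chain the two tokens flanking the block already screen it off from the rest. Hence $\mathcal{I}_1(M_b)=\mathcal{I}(M_b)$ and $R^*(M_b)\le 1$ for every $b$. What the paper's framework offers and yours does not is the multi-scale story, i.e.\ why a single block size is insufficient. That story, however, goes beyond the stated proposition.

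\textbf{Possible strengthening.} If you want a bound of order $\ell$ in your framework, apply the same projection argument to the first position of a block. Its opposite flank lies at distance $\ell$. You would need to assume that this far flank carries information beyond the nearer visible tokens.

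\textbf{Two small corrections.} First, $d(i)$ is only a lower bound on the distance to the nearest visible token, because non-overlapping blocks may be adjacent; this only helps your argument. Second, your sufficient condition should require nonvanishing correlation at lag $\lceil\ell/2\rceil$, i.e.\ failure of mean-independence. Dependence confined to higher moments gives zero $L^2$ gap.
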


We justify Proposition \ref{prop:mtm} in Appendix \ref{sec:prop2} that block-wise masking is not merely a heuristic design, but an information-theoretic necessity for enforcing multi-scale temporal dependency learning. The masking block size directly controls the minimum temporal horizon that the model is forced to reason over. 

\noindent\textbf{Next Time Prediction (NTP) for Prospective Temporal Modeling.} 
While MTM focuses on modeling semantic dependencies among interaction patterns, it does not explicitly constrain the temporal dimension. To model temporal evolution directly, we introduce the NTP task.
Given the embeddings from the TGPM encoder
$\mathbf{P}'=[\mathbf{p}'_1,\dots,\mathbf{p}'_m]$,
a two-layer MLP head $f_{\mathrm{NTP}}$ predicts the target-relative time interval encoding of the next interaction:
\begin{equation}
p(\mathbf{t}_1,\dots,\mathbf{t}_m)
=
\prod_{i=1}^{m}
p(\mathbf{t}_i \mid \mathbf{t}_{<i}, \bar{\mathbf{p}}_{<i}),
\end{equation}
where $\mathbf{t}_i$ is the encoding corresponding to $\mathbf{p}_i$.
The training objective is
\begin{equation}
\mathcal{L}_{\mathrm{NTP}}
=
\frac{1}{m-1}
\sum_{i=1}^{m-1}
\| f_{\mathrm{NTP}}(\mathbf{p}'_i) - \mathbf{t}_{i+1} \|.
\end{equation}

The NTP encourages TGPM to infer when the next interaction is likely to occur conditioned on the observed patterns. Specifically, it allows TGPM to encode the temporal granularity of evolution, such as the typical gaps between successive interaction patterns, as well as frequency-related signals, reflecting how often interactions involving certain evolving patterns occur. Thus, the model learns to associate different evolving patterns with distinct temporal rhythms.

\noindent\textbf{Joint training and downstream usage.}
The overall pre-training objective is  $\mathcal{L} = \mathcal{L}_{\mathrm{MTM}} + \mathcal{L}_{\mathrm{NTP}}$.
After pre-training, the decoder is discarded and a task-specific prediction head is attached to the TGPM encoder. For downstream tasks, we apply mean pooling over the contextualized sequence $\mathbf{P}'$ and compute predictions as
\(
\hat{\mathbf{y}}
=
\mathrm{Head}\!\left(\frac{1}{m}\sum_{i=1}^{m}\mathbf{p}'_i\right)
\).

%% file: section/experiment_icml.tex
\input{table/main_result}

% \begin{figure*}[!t]
% \centering
% \begin{minipage}[!t]{0.55\textwidth}
% \centering
% \captionof{table}{\textbf{Performance and average ranking (A.R.) comparison over temporal node classification.} The best and sub-best results are highlighted in \textbf{Boldface} and \underline{Underline}. TGPM achieves the strongest A.R..
% }
\begin{table}
\centering
\caption{\textbf{Performance and average ranking (A.R.) comparison over temporal node classification.} The best and sub-best results are highlighted in \textbf{Boldface} and \underline{Underline}. TGPM achieves the strongest A.R..
}
\label{tab:nc}
  \resizebox{0.9\columnwidth}{!}{
    \begin{tabular}{l|cc|c}
    \toprule
    \multicolumn{1}{c|}{\textbf{Methods}} & \textbf{Wikipedia} & \textbf{Reddit} & \textbf{A.R.} \\
    \midrule
    TGAT  & 80.03±1.43 & 50.96±3.85 & 6.5 \\
    GraphMixer & 85.47±0.79 & 51.98±1.78 & 4.5 \\
    DyGFormer & 77.93±11.79 & 53.35±1.78 & 6.0 \\
    \midrule
    PT-DGNN & 81.61±1.29 & 49.32±1.82 & 6.0 \\
    DDGCL & 78.04±3.95 & 53.14±3.24 & 6.0 \\
    CPDG  & 80.47±0.44 & 54.99±2.40 & 4.0 \\
    \midrule
    TGPM w/o Pretrain & \underline{86.83±1.08} & \underline{55.21±2.17} & \underline{2.0}  \\
    TGPM  & \textbf{87.40±0.70} & \textbf{55.54±1.22} & \textbf{1.0} \\
    \bottomrule
    \end{tabular}%
    }
\end{table}
% \end{minipage}
% \hfill
% \begin{minipage}[!t]{0.4\textwidth}
% \includegraphics[width=0.9\linewidth]{fig/curve.png}
% \caption{\textbf{Convergence curve comparison between training from scratch and fine-tuning the pre-trained model. }
% }
% \label{fig:converge}

% \end{minipage}
% \vspace{-10pt}
% \end{figure*}

\section{Experiments}
\label{sec:experiment}

\subsection{Experimental Settings}

We evaluate the effectiveness of our proposed TGPM on two downstream tasks. For temporal link prediction, we adopt three datasets from DTGB~\cite{zhang2024dtgb}, including Enron, ICEWS1819 and Googlemap CT. Following prior work, evaluation is conducted under both transductive and inductive settings~\cite{yu2023towards,congwe}. The transductive setting aims to predict future links between nodes within training set, and the inductive setting predicts future links on unseen nodes. 
For each link, one negative sample is generated for classification.
For temporal node classification, we adopt two widely used datasets, Wikipedia and Reddit~\cite{kumar2019jodie}. 
These datasets are featured by rich attributes, spanning various domains. We adopt the same dataset split strategy for all these datasets, which is 70/15/15 train/val/test chronological split. ROC-AUC score is used as the evaluation metric for both downstream tasks. 

We compare our TGPM with supervised methods (TGAT~\cite{xu2020Inductive}, GraphMixer~\cite{congwe}, DyGFormer~\cite{yu2023towards}) and self-supervised learning methods (PT-DGNN~\cite{chen2022679}, DDGCL~\cite{tian2021ddgcl}, CPDG~\cite{bei2023cpdg}) targeting temporal graphs. For TGPM, we report the results both before (TGPM w/o Pretrain) and after pre-training (TGPM). More details are provided in Appendix \ref{sec:implementation}.

\subsection{Main Results}

\textbf{Temporal Link Prediction.} Table \ref{tab:main} summarizes temporal link prediction results under both transductive and inductive settings. Pre-training is conducted on each individual dataset. Overall, TGPM achieves the best average ranking of 3.0 among all selected methods, with best and sub-best performance in most datasets under both transductive and inductive setting. Notably, on Googlemap CT with more temporal dynamics, our proposed pre-training strategies can bring significant performance gain (80.51 vs. 78.55), highlighting its ability to capture complex evolving patterns. In contrast, on datasets with significant temporal burstiness, TGPM already performs well without pre-training due to sufficient temporal context information, but our pre-training strategies are much less beneficial and may even lead to trivial solution, reflecting the inadequacy of our method in handling temporal burstiness. 

\textbf{Temporal Node Classification.} Table \ref{tab:nc} summarizes temporal node classification results. Pretrained TGPM and TGPM trained from scratch achieve the best and sub-best performance on both datasets. Given that both Wikipedia and Reddit are featured by rich temporal dynamics, the consistent performance gain demonstrate the strength of TGPM in capturing complex evolving patterns. 

\input{table/transfer}

\begin{figure*}[!t]
\centering
\begin{minipage}[!t]{0.48\textwidth}
\centering
\captionof{table}{\textbf{Cross-task transferability of TGPM compared with in-domain pre-training.} Transferred TGPM achieves competitive and even superior temporal node classification performance compared to in-domain pre-trained model.
}
\label{tab:cross_task}
  \resizebox{0.9\columnwidth}{!}{
    \begin{tabular}{l|rr|r}
    \toprule
    \multicolumn{1}{c|}{\textbf{Methods}} & \multicolumn{1}{c}{\textbf{Wikipedia}} & \multicolumn{1}{c|}{\textbf{Reddit}} & \multicolumn{1}{c}{\textbf{A.R.}} \\
    \midrule
    PT-DGNN & 81.61±1.29 & 49.32±1.82 & 5.0 \\
    DDGCL & 78.04±3.95 & 53.14±3.24 & 5.5 \\
    CPDG  & 80.47±0.44 & 54.99±2.40 & 4.0 \\
    \midrule
    TGPM  & \textbf{87.40±0.70} & \underline{55.54±1.22} & \textbf{1.5} \\
    \midrule
    TGPM (ICEWS) & 85.64±0.80 & 54.99±2.78 & \underline{3.0} \\
    TGPM (Googlemap) & \underline{86.76±0.91} & \textbf{61.16±2.59} & \textbf{1.5} \\
    \bottomrule
    \end{tabular}%
    }
\end{minipage}
\hfill
\begin{minipage}[!t]{0.51\textwidth}
\centering
\captionof{table}{\textbf{Ablation of model components and alternative solutions in TGPM on Googlemap CT and Wikipedia.}
}
\label{tab:ablation}%
  \resizebox{0.9\columnwidth}{!}{
    \begin{tabular}{l|cc|c}
    \toprule
    \multicolumn{1}{c|}{\multirow{2}[4]{*}{\textbf{Methods}}} & \multicolumn{2}{c|}{\textbf{Googlemap CT}} & \multirow{2}[4]{*}{\textbf{Wikipedia}} \\
\cmidrule{2-3}          & \textbf{Transductive} & \textbf{Inductive} &  \\
    \midrule
    Full Method & \textbf{80.51} & \textbf{74.32} & \textbf{87.40} \\
    \midrule
    TGPM w/o ntp & 79.59 & 73.25 & 84.63 \\
    TGPM w/o ltm & 80.44 & 74.28 & 86.47 \\
    TGPM w/o stm & 79.31 & 72.44 & 84.38 \\
    \midrule
    TGPM w/ cp & 79.58 & 73.07 & 87.00 \\
    TGPM w/ rm & 79.39 & 72.62 & 87.25 \\
    \bottomrule
    \end{tabular}%
    }
\end{minipage}
\end{figure*}

\textbf{Cross-Domain Transferability.}
We evaluate the cross-domain transferability of TGPM in Table \ref{tab:cross}, measuring generalization under distribution shifts across domains. In this setting, we train models on a source graph and evaluate on target graphs without further adaptation. We adopt ICEWS1819 and Googlemap CT as source graphs respectively and treat the remaining graphs as target graphs. Self-supervised learning baselines (PT-DGNN, DDGCL, CPDG) are adopted for comparison. TGPM achieves the best transfer results, showing significant positive gains over all target graphs. We attribute this to TGPM's ability to learn transferable evolving patterns, enabling generalization across significantly different domains, while time-shifting and structural proximity based self-supervised learning remains sensitive to subtle shifts of temporal dynamics. 

\textbf{Cross-Task Transferability.} We measure the cross-task transferability of TGPM, evaluating whether the learned evolving patterns in TGPM can be effectively generalized to diverse feature spaces and downstream tasks. In this setting, we pre-train TGPM on link prediction datasets ICEWS1819 and Googlemap CT respectively, and further adapt to temporal node classification via fine-tuning. We report the in-domain pre-trained results of all self-supervised methods and cross-task fine-tuned results of TGPM in Table \ref{tab:cross_task}, where TGPM (ICEWS) and TGPM (Googlemap) denote the respective source datasets. TGPM transferred from both temporal link prediction datasets achieves competitive and even superior performance on both Wikipedia and Reddit, demonstrating TGPM's ability in capturing generalizable evolving patterns regardless of feature and temporal heterogeneity. Besides, TGPM pre-trained on Googlemap CT shows stronger cross-task generalizability, indicating that TGPM can better leverage its advantages on datasets with complex temporal dynamics.

\subsection{Discussion}

% \begin{figure*}[!t]
% \centering
% \begin{minipage}[!t]{0.55\textwidth}
% \centering
% \includegraphics[width=0.9\linewidth]{fig/scalability.png}
% \caption{\textbf{Parameter Scaling Analysis.} Increasing model parameters can consistently enhance performance on transductive settings, showing the great potential of TGPM acting as the scalable backbone in temporal graph learning. 
% }
% \label{fig:scale}
% \end{minipage}
% \hfill
% \begin{minipage}[!t]{0.44\textwidth}
% \includegraphics[width=0.9\linewidth]{fig/efficiency_plot.pdf}
% \caption{\textbf{Computational Efficiency Analysis.} Sizes of nodes indicate the peak GPU memory usage. TGPM maintains competitive training efficiency, significantly more efficient than self-supervised baselines.
% }
% \label{fig:efficiency}
% \end{minipage}
% \vspace{-10pt}
% \end{figure*}

\begin{figure}[!t]
    \centering
    \subfigure[Statistics of temporal fan-out and edge feature homogeneity across datasets.]{\includegraphics[width=0.8\linewidth]{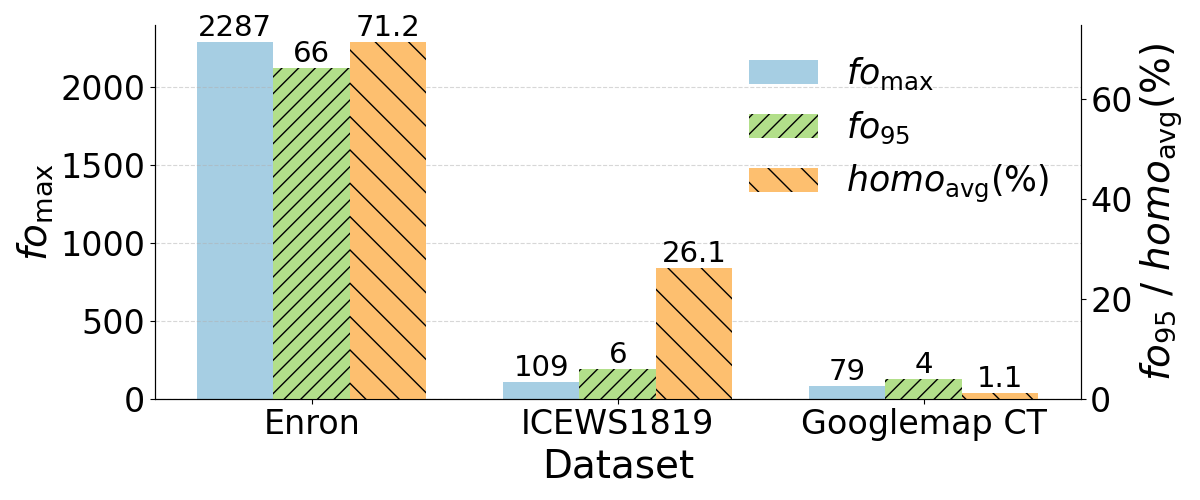}
    \label{fig:burst}}
    \vspace{-10pt}
    \subfigure[Example of temporal burstiness on the derived temporally biased random walks starting from node 120 in Enron.]{\includegraphics[width=0.8\linewidth]{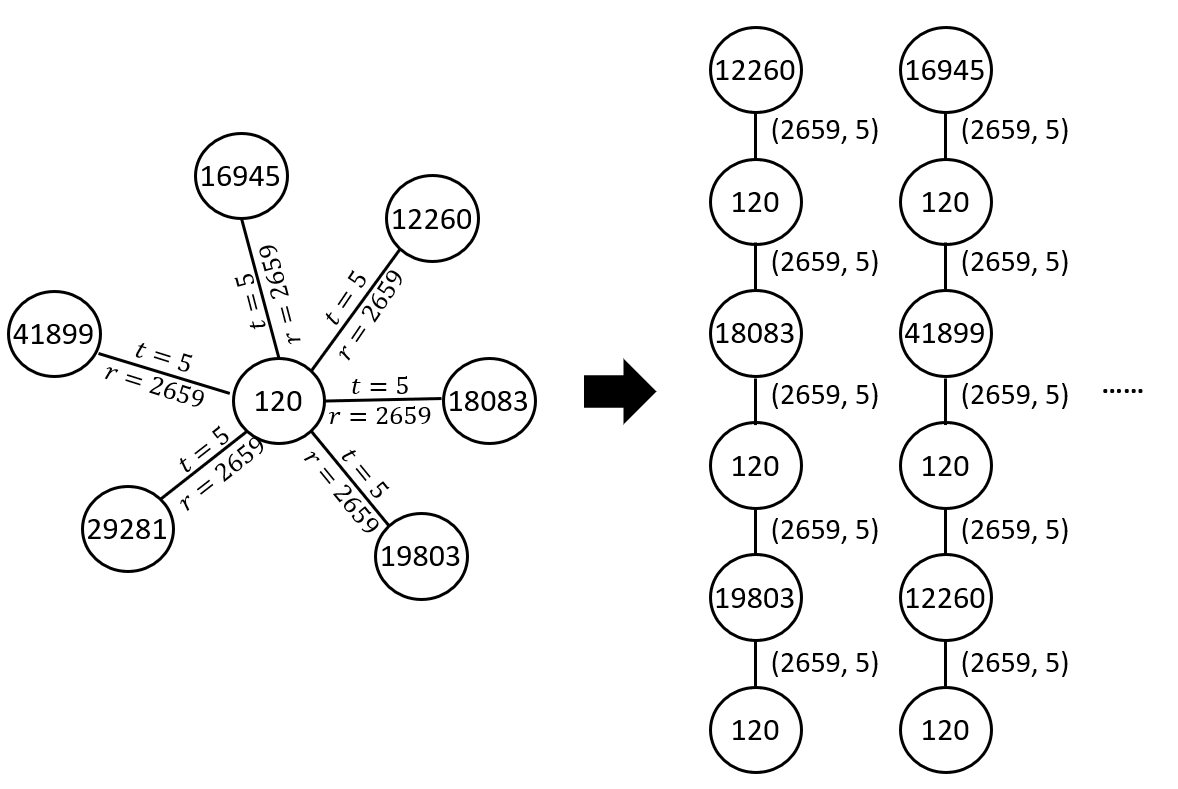}
    \label{fig:failcase}}
    \caption{\textbf{Failure Case Analysis in Enron,} which has large-scale homogeneous temporal burstiness, leading to highly similar sampled patterns and performance degradation of pretraining.  
    }
    \vspace{-15pt}
\end{figure}

% \textbf{Parameter Scaling.} Built on Transformer architecture, TGPM can naturally scale to larger model sizes by stacking more Transformer layers or extending to larger hidden dimension. As illustrated in Figure \ref{fig:scale}, increasing model parameters can consistently enhance performance on selected benchmarks. However, for datasets with abundant temporal dynamics like Googlemap CT, the performance gain from increased parameter is more promising. In contrast, for datasets with more concurrent edges, TGPM struggles to scale due to its disadvantage in processing temporal burstiness.

\textbf{Ablation Study.} We conduct two branches of ablation studies on Googlemap CT and Wikipedia: (1) Main ablation: we remove next time prediction (w/o NTP), long-term block masking (w/o LTM) and short-term block masking (w/o STM) respectively; (2) Alternative solution discussion: we first replace temporal random walk with chronologically monotonic causal path (w/ CP), and then replace short-/long-term masking with random masking (w/ RM). 
Table \ref{tab:ablation} presents the results of ablation studies. For our proposed components, NTP and short-term masking significantly contribute to evolving pattern mining, while the impact of long-term masking is domain-specific. This suggests that temporal evolving mechanisms and short-range dependencies are critical for understanding generalizable evolving patterns. Based on the results of alternative solutions, both causal paths and conventional random masking based generative pre-training consistently yield inferior performance, indicating the importance of pattern ordering relaxation and block-based dependency modeling. 

% \textbf{Computational Efficiency.} We evaluate computational efficiency by measuring average per-batch inference latency and peak GPU memory usage on Wikipedia. For supervised methods and self-supervised methods, we report the statistics during training and pre-training respectively. As shown in Figure \ref{fig:efficiency}, TGPM achieves competitive inference efficiency. Compared to self-supervised baselines, TGPM is substantially faster and more memory-efficient. These results demonstrate that TGPM achieves satisfying trade-off between predictive performance and inference efficiency.

\textbf{Failure Case Analysis.} In this section, we analyze the homogeneous temporal burstiness phenomenon in Enron and explain why TGPM fails in such situation. We design two metrics to measure the extent of temporal burstiness: \textit{(1) fan-out} $fo$ and \textit{(2) feature homogeneity score} $homo$, which measure the scale of temporal burstiness and the homogeneity of broadcasting behavior, respectively. Given a starting node $u$ and timestamp $t$, $fo$ and $homo$ can be defined as $fo(u,t) = |\{v|(u,v,t)\in\mathcal{E}\}|$, $homo(u,t)=\frac{
        \sum_{q} \binom{n_q(u,t)}{2}
        }{
        \binom{|\mathcal{S}_u^t|}{2}
        } $
where $q$ is the index of edge feature, $n_q(u,t)$ denotes the number of edges in $\mathcal{S}_u^t$ whose feature index equals $q$, and $\sum_{q}n_q(u,t)=|\mathcal{S}_u^t|$. Higher $fo$ and $homo$ refers to larger-scale homogeneous broadcasting behavior, or more concurrent edges with the same features. Figure \ref{fig:burst} reports the maximum $fo$, the 95th percentile of $fo$ and the average $homo$ ($fo_{\mathrm{max}},fo_{95}, homo_{\mathrm{avg}}$) of all the three benchmarks. Enron exhibits the largest scale of temporal burstiness, and the concurrent edges are homogeneous, corresponding to bulk emails in email network. As illustrated in Figure 5(b), this phenomenon produces highly similar interaction patches: half of sampled node features and all edge and temporal features are shared across sampled walks. As a result, masked patches can be trivially reconstructed from visible ones, leading to the collapse of MTM and NTP pre-training, preventing TGPM from learning meaningful temporal patterns. One promising direction is to aggregate homogeneous concurrent edges into meta-patterns before pre-training.

% As a result, the temporal motif illustrated in Figure \ref{fig:failcase} is very common in Enron, where many leaf nodes are connected with a central node by edges with exactly the same timestamp and features. The patterns sampled to describe the temporal context of the central node will be very similar: half of sampled node features and all the sampled temporal and edge features will be the same.
% Under such setting, our proposed MTM and NTP pre-training will easily lead to trivial solution, because the temporal random walks will degrade to basic random walks, making the masked patterns and next timestamp easier to be reconstructed from visible patterns. 
% As a result, it is difficult for TGPM to extract meaningful knowledge from large-scale homogeneous temporal burstiness. To overcome the challenge, a potential improvement is to aggregate large-scale homogeneous concurrent edges into a single meta-pattern, and model the correlations among meta-patterns. 

It is worth noting that while temporal burstiness is a well-studied property of real-world temporal networks~\cite{barabasi2005origin}, the failure mode identified here additionally requires large-scale feature homogeneity among concurrent edges, which is highly domain-specific. Real-world temporal graphs usually exhibit diverse edge features even during bursty periods, as concurrent interactions typically originate from independent user behaviors. Therefore, while this failure case represents a meaningful limitation of TGPM, it does not reflect a fundamental gap.

%% file: table/main_result.tex
\begin{table*}[!t]
  \centering
  \caption{\textbf{Performance and average ranking (A.R.) comparison over transductive and inductive temporal link prediction.} Results are reported on Enron, ICEWS1819 and Googlemap CT, with the best and sub-best performance highlighted in \textbf{Boldface} and \underline{Underline}. TGPM achieves the strongest A.R., with larger gains on datasets with rich temporal dynamics.}
  \resizebox{2.0\columnwidth}{!}{
    \begin{tabular}{l|ccc|ccc|c}
    \toprule
    \multirow{2}[4]{*}{\textbf{Methods}} & \multicolumn{3}{c|}{\textbf{\textit{Transductive Link Prediction}}} & \multicolumn{3}{c|}{\textbf{\textit{Inductive Link Prediction}}} & \multirow{2}[4]{*}{\textbf{A.R.}} \\
\cmidrule(lr){2-4} \cmidrule(lr){5-7}          & \textbf{Enron} & \textbf{ICEWS1819} & \textbf{Googlemap CT} & \textbf{Enron} & \textbf{ICEWS1819} & \textbf{Googlemap CT} &  \\
    \midrule
    TGAT  & 95.30±0.54 & 98.73±0.03 & 79.11±0.63 & 84.51±1.92 & 96.06±0.07 & \underline{73.41±1.07} & 3.8 \\
    GraphMixer & 95.31±0.19 & 98.68±0.01 & \underline{79.23±0.04} & 83.45±1.00 & 95.89±0.07 & 72.43±0.18 & 5.5 \\
    DyGFormer & 95.30±0.16 & 98.86±0.04 & 77.46±0.53 & \underline{86.42±0.34} & \textbf{96.60±0.16} & 70.97±0.36 & 3.8 \\
    \midrule
    PT-DGNN & \underline{95.55±0.13} & 98.81±0.07 & 77.35±0.76 & \textbf{86.64±0.89} & 96.07±0.31 & 71.50±1.31 & 4.0 \\
    DDGCL & 94.77±0.52 & 98.65±0.03 & 78.62±0.10 & 84.47±1.62 & 95.89±0.16 & 72.45±0.33 & 5.0 \\
    CPDG  & 93.97±0.62 & 97.54±0.04 & 76.52±0.58 & 80.23±1.36 & 92.04±0.08 & 70.16±0.98 & 7.7 \\
    \midrule
    TGPM w/o Pretrain & \textbf{96.06±0.15} & \underline{98.90±0.04} & 78.55±0.51 & 86.13±1.24 & 96.38±0.12 & 72.35±0.37 & \underline{3.2} \\
    TGPM  & 94.68±0.18 & \textbf{98.94±0.03} & \textbf{80.51±0.44} & 81.64±1.16 & \underline{96.39±0.18} & \textbf{74.32±0.34} & \textbf{3.0} \\
    \bottomrule
    \end{tabular}%
    }
  \label{tab:main}%
  \vspace{-10pt}
\end{table*}%

%% file: table/transfer.tex
\begin{table*}[!t]
\centering
\caption{\textbf{Comparison with self-supervised methods over cross-domain transferability.} TGPM achieves the strongest performance compared to prior works across all datasets.}
\label{tab:cross}
 \setlength{\tabcolsep}{4pt}
  \resizebox{2.0\columnwidth}{!}{
%     \begin{tabular}{l|cc|cc|c}
%     \toprule
%     \multicolumn{1}{c|}{\multirow{2}[4]{*}{\textbf{Methods}}} & \multicolumn{2}{c|}{\textbf{ICEWS1819}} & \multicolumn{2}{c|}{\textbf{Googlemap CT}} & \multicolumn{1}{c}{\multirow{2}[4]{*}{\textbf{A.R.}}} \\
% \cmidrule(lr){2-3}\cmidrule(lr){4-5}          & \textbf{Enron} & \textbf{Googlemap CT} & \textbf{Enron} & \textbf{ICEWS1819} & \multicolumn{1}{c}{} \\
%     \midrule
%     \multicolumn{6}{c}{\textit{\textbf{Transductive Link Prediction}}} \\
%     \midrule
%     PT-DGNN & 90.83±1.22 & \underline{53.79±6.25} & 65.79±5.58 & \underline{66.99±10.53} & \multicolumn{1}{c}{3.0} \\
%     DDGCL & \underline{91.40±1.30} & 52.42±4.93 & 66.66±13.68 & 56.71±17.33 & \multicolumn{1}{c}{\underline{2.8}} \\
%     CPDG  & 89.18±0.70 & 51.96±2.27 & \underline{75.74±2.58} & 66.12±6.45 & \multicolumn{1}{c}{3.3} \\
%     \midrule
%     TGPM  & \textbf{92.51±0.53} & \textbf{56.21±6.96} & \textbf{87.06±5.37} & \textbf{88.92±5.42} & \multicolumn{1}{c}{\textbf{1.0}} \\
%     \midrule
%     \multicolumn{6}{c}{\textit{\textbf{Inductive Link Prediction}}} \\
%     \midrule
%     PT-DGNN & 71.06±3.29 & 48.74±5.12 & 49.24±4.92 & 51.20±7.08 & 3.5 \\
%     DDGCL & \underline{72.97±3.70} & 49.38±3.68 & 58.42±5.78 & 46.05±10.41 & 3.0 \\
%     CPDG  & 65.89±2.98 & \underline{50.81±2.33} & \underline{67.72±5.04} & \underline{53.04±6.34} & \underline{2.5} \\
%     \midrule
%     TGPM  & \textbf{78.64±1.18} & \textbf{53.09±4.56} & \textbf{77.01±3.29} & \textbf{79.84±2.43} & \textbf{1.0} \\
%     \bottomrule
%     \end{tabular}%
   
\begin{tabular}{l|cc|cc|c|cc|cc|c}
\toprule
\multirow{3}{*}{\textbf{Methods}}
& \multicolumn{5}{c|}{\textbf{\textit{Transductive Link Prediction}}}
& \multicolumn{5}{c}{\textbf{\textit{Inductive Link Prediction}}} \\
\cmidrule(lr){2-6}\cmidrule(lr){7-11}
& \multicolumn{2}{c|}{\textbf{ICEWS1819}}
& \multicolumn{2}{c|}{\textbf{Googlemap CT}}
& \multirow{2}{*}{\textbf{A.R.}}
& \multicolumn{2}{c|}{\textbf{ICEWS1819}}
& \multicolumn{2}{c|}{\textbf{Googlemap CT}}
& \multirow{2}{*}{\textbf{A.R.}} \\
\cmidrule(lr){2-3}\cmidrule(lr){4-5}
\cmidrule(lr){7-8}\cmidrule(lr){9-10}
& \textbf{Enron}
& \textbf{Googlemap CT}
& \textbf{Enron}
& \textbf{ICEWS1819}
&
& \textbf{Enron}
& \textbf{Googlemap CT}
& \textbf{Enron}
& \textbf{ICEWS1819}
& \\
\midrule
PT-DGNN
& 90.83$\pm$1.22
& \underline{53.79$\pm$6.25}
& 65.79$\pm$5.58
& \underline{66.99$\pm$10.53}
& 3.0
& 71.06$\pm$3.29
& 48.74$\pm$5.12
& 49.24$\pm$4.92
& 51.20$\pm$7.08
& 3.5 \\

DDGCL
& \underline{91.40$\pm$1.30}
& 52.42$\pm$4.93
& 66.66$\pm$13.68
& 56.71$\pm$17.33
& \underline{2.8}
& \underline{72.97$\pm$3.70}
& 49.38$\pm$3.68
& 58.42$\pm$5.78
& 46.05$\pm$10.41
& 3.0 \\

CPDG
& 89.18$\pm$0.70
& 51.96$\pm$2.27
& \underline{75.74$\pm$2.58}
& 66.12$\pm$6.45
& 3.3
& 65.89$\pm$2.98
& \underline{50.81$\pm$2.33}
& \underline{67.72$\pm$5.04}
& \underline{53.04$\pm$6.34}
& \underline{2.5} \\

\midrule
TGPM
& \textbf{92.51$\pm$0.53}
& \textbf{56.21$\pm$6.96}
& \textbf{87.06$\pm$5.37}
& \textbf{88.92$\pm$5.42}
& \textbf{1.0}
& \textbf{78.64$\pm$1.18}
& \textbf{53.09$\pm$4.56}
& \textbf{77.01$\pm$3.29}
& \textbf{79.84$\pm$2.43}
& \textbf{1.0} \\
\bottomrule
\end{tabular}
    }
\end{table*}

%% file: section/related_work.tex
\section{Related Work}

\textbf{Temporal Graph Representation Learning.} Representation learning on temporal graphs has been widely studied~\cite{feng2025comprehensive,longa2023graph,kazemi2020representation}. 
Recently, continuous-time methods have emerged to directly learn on the whole event sequences of dynamic graphs with temporal random walks~\cite{nguyen2018continuous,wanginductive,jin2022neural}, temporal point process~\cite{trivedi2019dyrep,wen2022trend}, memory networks~\cite{kumar2019jodie,ma2020streaming,tgn_icml_grl2020,wang2021apan,luo2022neighborhood} and temporal graph neural networks~\cite{xu2020Inductive}. To efficiently learn from long-term interaction-wise dependencies, most state-of-the-art methods turn to sequential models, especially Transformer-based models~\cite{wang2021tcl,congwe,yu2023towards,wu2024feasibility,pan2025light,wu2025retrieval,peng2025tidformer}. Although these methods have shown empirical success, they are mostly task-centric methods suffering from limited receptive field, retrospective temporal modeling and inability to capture evolving semantics.
Our proposed TGPM can overcome these challenges.

\textbf{Temporal Graph Pre-Training.} Extended from static graph pre-training methods~\cite{hu2020strategies, zhao2021multi,zhao2023self,qian2022co,ju2022adaptive,liu2023graphssl,wang2023heterogeneous,wang2025training}, continuous-time temporal graph pre-training methods aim to capture inherent properties of temporal graphs via various self-supervised signals. Commonly used pre-training strategies include structural and temporal contrastive learning~\cite{bei2023cpdg,tian2021ddgcl}, masked attribute reconstruction~\cite{chen2022679}, curvature-aware self-contrastive learning~\cite{sun2022selfrgnn} and graph prompting~\cite{chen2024tigprompt,yunode,thapaliya2025semantic,wang2025can}. However, their focus on time-shifting invariant properties and low-level signal reconstruction misaligned with generalizable evolution pattern extraction. 
In contrast, TGPM introduces a pattern-centric pre-training framework targeting multi-scale temporal dynamics and prospective temporal modeling, which is able to learn from complex evolving patterns and unlock better generalizability.

\textbf{Graph Tokenization.} Graph pattern tokenization is a long-standing challenge due to the non-Euclidean nature of graphs. Early methods adopt graph kernels~\cite{yanardag2015dgk} and anonymous random walks~\cite{pmlr-v80-ivanov18a} to extract graph patterns. Recent approaches propose to tokenize graphs into sequences of substructures, such as subgraphs~\cite{he2023generalization}, multisets~\cite{baek2021accurate} and multi-hop neighborhood features~\cite{chen2023nagphormer,chen2024leveraging}. To further overcome task heterogeneity in graph learning, task-agnostic tokenization via message passing~\cite{Graph-Mamba,Chen22a,zhao2022from}, tree structures~\cite{wang2024gft,wang2025towards} and random walks~\cite{wangbeyond,wang2025generative} are proposed. However, these methods are inherently designed for static graphs, which emphasizes the learning of substructure correlations. In contrast, temporal graph learning focuses on evolution mechanism learning. 

%% file: section/conclusion.tex
\section{Conclusion}

We propose TGPM, a pattern-centric temporal graph learning framework that models generalizable temporal evolution mechanisms instead of task-specific correlations. By representing interactions as patches constructed via temporally biased random walks, TGPM captures evolving structural semantics and long-term dependencies. 
A Transformer backbone captures global invariances and local temporal patterns, aided by self-supervised objectives for learning multi-scale dynamics.
Extensive experiments demonstrate TGPM’s strong transferability and scalability. We also observe performance degradation under large-scale homogeneous temporal burstiness, indicating important directions for future work.

%% file: section/appendix.tex
\section{Additional Experimental Details}
\label{sec:implementation}

\subsection{Datasets}

For temporal link prediction, we evaluate all the selected methods on three semantically-enriched temporal graph benchmarks from the Dynamic Text-Attributed Graph Benchmark (DTGB)~\cite{zhang2024dtgb}: Enron, ICEWS1819, and Googlemap CT. These datasets cover diverse application domains and exhibit substantially different temporal characteristics, enabling a comprehensive evaluation of both in-domain performance and cross-domain generalization ability.
Enron is an email communication network where nodes represent email users and temporal edges correspond to email exchanges with timestamps. This dataset is characterized by strong temporal burstiness, where a large number of interactions may occur concurrently at the same timestamp, often sharing highly homogeneous edge features. 
ICEWS1819 is a temporal knowledge graph derived from the Integrated Crisis Early Warning System (ICEWS), covering international political events from 2018 to 2019. Nodes represent entities, and edges correspond to event interactions annotated with rich semantic features and timestamps. Compared to Enron, ICEWS1819 contains fewer concurrent interactions per timestamp and exhibits clearer temporal evolution patterns.
Googlemap CT is a temporal interaction graph collected from a recommender system, where nodes represent users or items and edges denote time-stamped behaviors. This dataset exhibits complex long-term temporal dynamics, making it well-suited for evaluating models’ ability to capture multi-scale temporal dependencies.

For temporal node classification, the evaluation involves two widely used benchmarks from \cite{kumar2019jodie}: Wikipedia and Reddit, which are both bipartite temporal interaction graphs recorded in Unix timestamps. In both datasets, nodes represent users and items. Items refer to Wikipedia pages and subreddits, while temporal edges correspond to page edits on Wikipedia and posts on Reddit respectively. The node classification task is to predict whether a user will be banned at a given time point.

\input{table/dataset}

\subsection{Baselines}

\textbf{Supervised methods.} TGAT~\cite{xu2020Inductive} employs temporal attention mechanisms to aggregate time-aware neighborhood information and models temporal interactions using continuous time encodings.
GraphMixer~\cite{congwe} replaces message passing paradigm with time-aware MLPMixer to efficiently capture temporal dependencies in interaction sequences.
DyGFormer~\cite{yu2023towards} is a Transformer-based model that represents historical interactions as token sequences and stacks self-attention layers to model long-range temporal dependencies.

\textbf{Self-supervised methods.} PT-DGNN~\cite{chen2022679} adopts masked attribute reconstruction as a pre-training strategy for dynamic graph neural networks.
DDGCL~\cite{tian2021ddgcl} performs self-supervised contrastive learning on temporal graphs by contrasting node representations across temporally adjacent views.
CPDG~\cite{bei2023cpdg} introduces contrastive pre-training based on temporal and structural augmentations to improve generalization.

\subsection{Implementation Details}

Most experiments are conducted on Linux servers equipped with four Nvidia A40 GPUs. The models are implemented by PyTorch 2.4.0 and PyTorch Cluster 2.1.2, with CUDA 12.1 and Python 3.9. The hyperparameter settings are summarized in Table \ref{tab:hyperparam}. For optimization, we use the AdamW optimizer with weight decay and set the number of epochs as 10 for pre-training and 20 for fine-tuning. Full fine-tuning is conducted after pre-training, while TGPM backbone and task-specific decoder use different learning rate, with 1e-5 and 1e-4 respectively. We use a two-layer multi-layer perceptron as the prediction head for temporal link prediction to take the concatenated representations of two input nodes and return the probability of the given link. Temporal link prediction experiments are conducted five times with different random seeds, while temporal node classification experiments are conducted for three times. 

\input{table/hyperparameter}

\section{Supplementary Experimental Results}

\begin{figure*}[!t]
\centering
\begin{minipage}[!t]{0.29\textwidth}
    \includegraphics[width=0.9\linewidth]{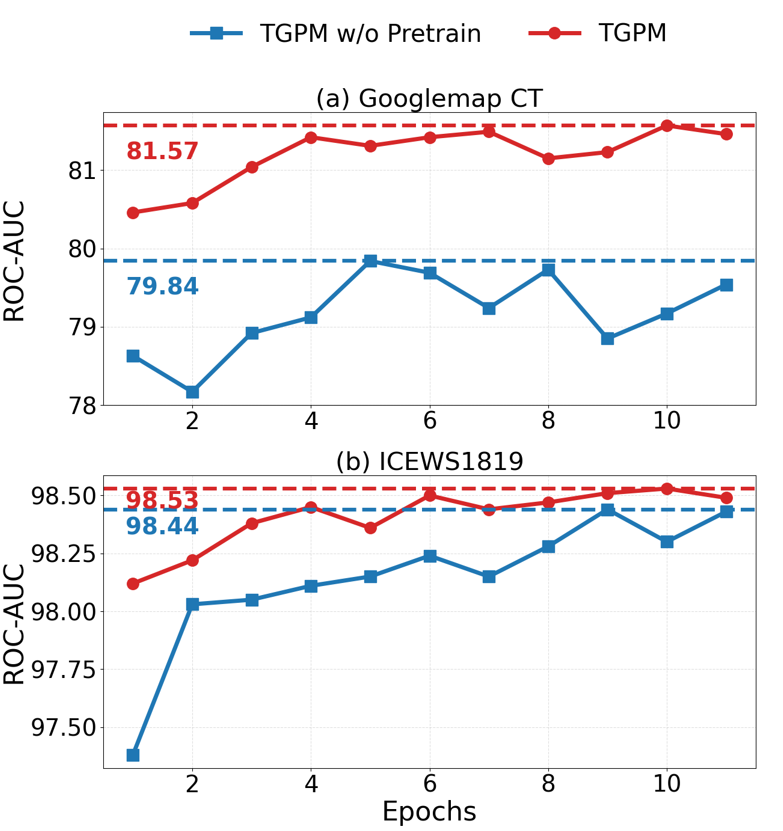}
\caption{\textbf{Convergence curve comparison between training from scratch and fine-tuning the pre-trained model. }
}
    \label{fig:converge}
\label{fig:efficiency}
\end{minipage}
\hfill
\begin{minipage}[!t]{0.33\textwidth}
    \includegraphics[width=0.9\linewidth]{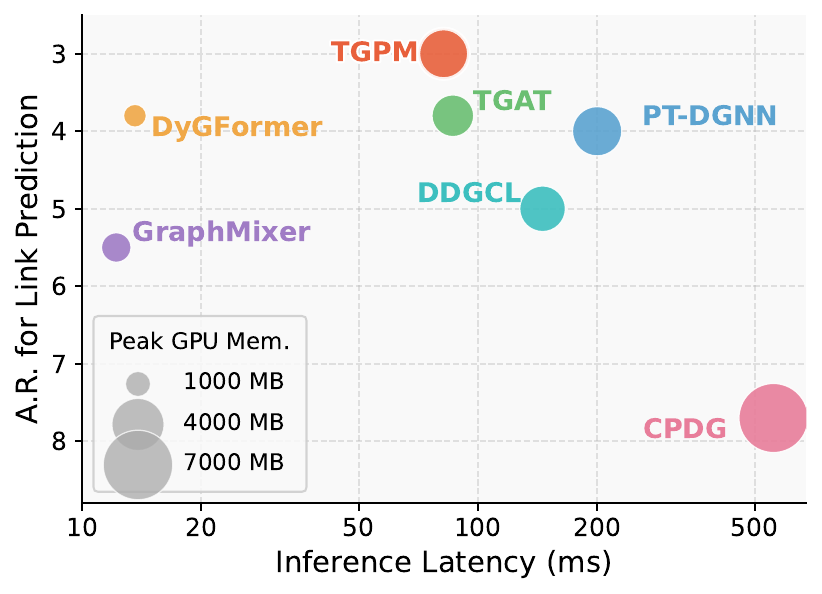}
\caption{\textbf{Computational Efficiency Analysis.} Sizes of points indicate the peak GPU memory usage. TGPM maintains competitive training efficiency, significantly more efficient than self-supervised baselines.
}
\label{fig:efficiency}
\end{minipage}
\hfill
\begin{minipage}[!t]{0.35\textwidth}
\centering
\includegraphics[width=0.9\linewidth]{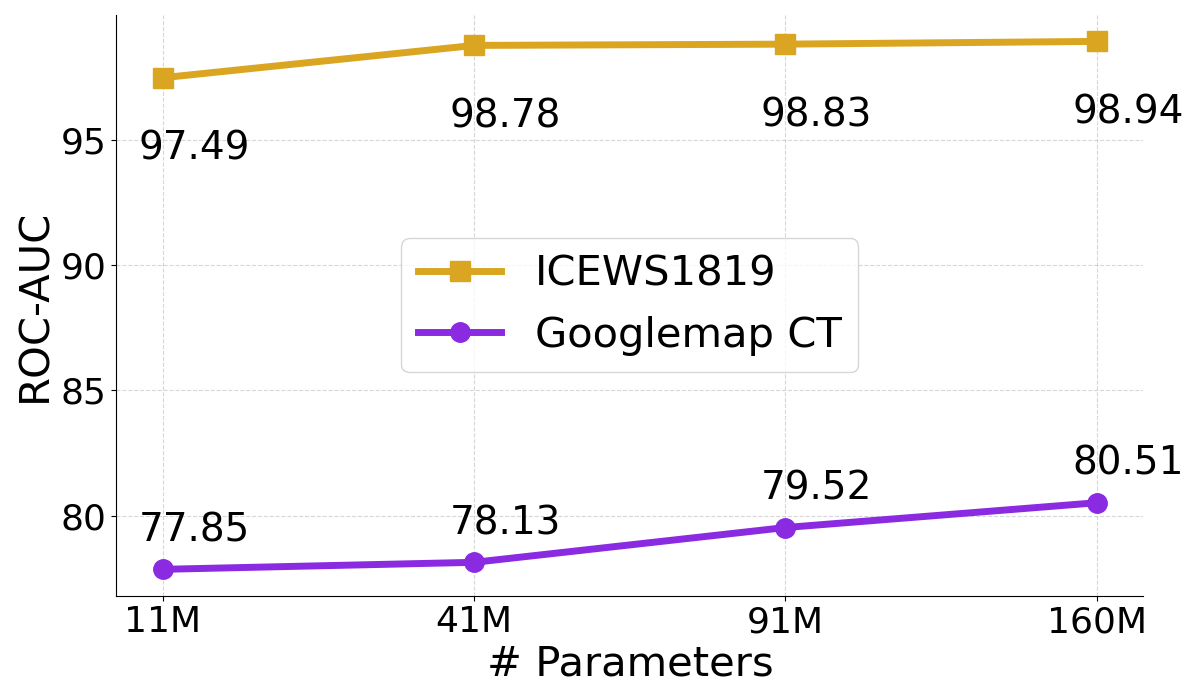}
\caption{\textbf{Parameter Scaling Analysis.} Increasing model parameters can consistently enhance performance on transductive settings, showing the great potential of TGPM acting as the scalable backbone in temporal graph learning. 
}
\label{fig:scale}
\end{minipage}
\vspace{-10pt}
\end{figure*}

\subsection{Convergence Curve Analysis}

Figure \ref{fig:converge} compares the training dynamics of TGPM trained from scratch versus those finetuned from a pre-trained TGPM checkpoint. We make comparisons over transductive temporal link prediction on Googlemap CT and ICEWS1819, where Googlemap CT has abundant temporal dynamics, while ICEWS1819 has fewer unique timestamps. Although pre-training facilitates performance improvement in both datasets, pre-training on datasets with more complex temporal dynamics like Googlemap CT leads to faster convergence and more significant performance gain, indicating that our proposed pre-training strategy can better capture complex temporal dependencies within rich temporal dynamics, improving in-domain generalization. As expected, accuracy improves with additional training epochs in both datasets.

\subsection{Computational Efficiency}

We evaluate computational efficiency by measuring average per-batch inference latency and peak GPU memory usage on Wikipedia. For supervised methods and self-supervised methods, we report the statistics during training and pre-training respectively. As shown in Figure \ref{fig:efficiency}, TGPM achieves competitive inference efficiency. Compared to self-supervised baselines, TGPM is substantially faster and more memory-efficient. These results demonstrate that TGPM achieves satisfying trade-off between predictive performance and inference efficiency.

\subsection{Parameter Scaling}

Built on Transformer architecture, TGPM can naturally scale to larger model sizes by stacking more Transformer layers or extending to larger hidden dimension. As illustrated in Figure \ref{fig:scale}, increasing model parameters can consistently enhance performance on selected benchmarks. However, for datasets with abundant temporal dynamics like Googlemap CT, the performance gain from increased parameter is more promising. In contrast, for datasets with more concurrent edges, TGPM struggles to scale due to its disadvantage in processing temporal burstiness.

\section{Proofs}
\label{sec:proof}

\input{section/math/3_1}

\input{section/math/3_2}

%% file: table/dataset.tex
\begin{table}[!h]
    \centering
    \caption{Dataset Statistics}
    \resizebox{0.7\linewidth}{!}{
        \begin{tabular}{c|cccc}
    \toprule
    \textbf{Dataset} & \textbf{Nodes} & \textbf{Edges} & \textbf{Unique Steps} & \textbf{Dim of Node/Edge} \\
    \midrule
    Enron & 42,711 & 797,907 & 1,006 & 768/768 \\
    ICEWS1819 & 31,796 & 1,100,071 & 730 & 768/768 \\
    Googlemap CT & 111,168 & 1,380,623 & 55,521 & 768/768 \\
    \midrule
    Wikipedia & 9,227 & 157,474 & 152,757 & --/172 \\
    Reddit & 10,984 & 672,447 & 669,065 & --/172 \\
    \bottomrule
    \end{tabular}%
    }
    \label{tab:dataset}%
\end{table}%

%% file: table/hyperparameter.tex
\begin{table}[htbp]
  \centering
  \caption{Hyperparameter Settings}
    \begin{tabular}{lc|lc}
    \toprule
    \textbf{Hyperparameter} & \textbf{Value} & \textbf{Hyperparameter} & \textbf{Value} \\
    \midrule
    Batch Size & 128    & Dropout & 0.1 \\
    Hidden Dimension & 768   & Weight Decay & 0.05 \\
    Time Encoding Dimension & 100   & Pre-Train Learning Rate & 0.0001 \\
    Number of Heads & 12    & Fine-Tune Learning Rate & 0.00001 \\
    Number of Encoder Layers & 2     & Pattern Size $L$ & 6 \\
    Number of Decoder Layers & 1     & Pattern Per Interaction $k$ & 8 \\
    EMA Momumtum & 0.99  & Masking Block Size $\mathcal{B}$ & [6, 24] \\
    EMA Update Every & 10    & Maximum Sampled Interaction $|\mathcal{S}_v^t|$  & 32 \\
    \bottomrule
    \end{tabular}%
  \label{tab:hyperparam}%
\end{table}%

%% file: section/math/3_1.tex
\subsection{Temporal Random Walk Expressiveness}
\label{sec:prop1}

\begin{definition}[Temporal Random Walk Expressiveness]
\label{def:walk_expressiveness}

% \zehong{Summarize the proof principle at the very top as a roadmap. Define temporal walks -> prove the expressiveness -> define reachability gap -> define disorder}

In this section, we formally establish the expressive advantage of non-monotonic temporal walks over monotonic walks. Our proof proceeds in four steps: (1) we define monotonic and non-monotonic temporal walks under the framework established in Section X; (2) we prove that non-monotonic walks are strictly more expressive by constructing a temporal graph where the inclusion is proper; (3) we quantify this expressiveness gap using the temporal reachability gap metric; and (4) we introduce temporal disorder as a measure to characterize the specific patterns captured by non-monotonic walks but missed by monotonic walks.

Given a temporal graph $\mathcal{G} = (\mathcal{V}, \mathcal{E}, \mathbf{X}, \mathbf{E})$ 
% \zehong{strictly follow the definition in the main paper} 
and anchor interaction $(v, v', t')$, we define:
\begin{itemize}
    \item \textbf{Monotonic temporal walk}: A node sequence $w = (v_0, v_1, \ldots, v_L)$ where $v_0 = v'$ and timestamps satisfy $\mathcal{T}(v_{i-1}, v_i) < \mathcal{T}(v_i, v_{i+1}) < t'$ for all valid $i$.
    \item \textbf{Non-monotonic temporal walk}: A node sequence $w = (v_0, v_1, \ldots, v_L)$ where $v_0 = v'$ and timestamps only satisfy $\mathcal{T}(v_i, v_{i+1}) < t'$ for all $i$.
\end{itemize}
Let $\mathcal{W}_{\text{mono}}^{t'}(v')$ and $\mathcal{W}_{\text{non}}^{t'}(v')$ denote the sets of reachable node sequences under monotonic and non-monotonic constraints, respectively.
\end{definition}

\begin{proposition}[Expressiveness of Non-Monotonic Temporal Walks]
\label{prop:expressiveness}
For any temporal graph $\mathcal{G}$ and anchor interaction $(v, v', t')$:
\begin{enumerate}
    \item[(i)] $\mathcal{W}_{\emph{mono}}^{t'}(v') \subseteq \mathcal{W}_{\emph{non}}^{t'}(v')$, i.e., non-monotonic walks are at least as expressive as monotonic walks.
    \item[(ii)] There exists a class of temporal graphs $\mathcal{G}^*$ such that $\mathcal{W}_{\emph{mono}}^{t'}(v') \subsetneq \mathcal{W}_{\emph{non}}^{t'}(v')$, i.e., the inclusion is strict.
    \item[(iii)] Non-monotonic walks can capture \emph{composite temporal motifs} that are fundamentally inaccessible to monotonic walks.
\end{enumerate}
\end{proposition}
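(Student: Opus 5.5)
Part (i) follows directly from the definitions. Take any $w=(v_0,\dots,v_L)\in\mathcal{W}_{\text{mono}}^{t'}(v')$. The monotonic constraint $\mathcal{T}(v_{i-1},v_i)<\mathcal{T}(v_i,v_{i+1})<t'$ gives in particular $\mathcal{T}(v_i,v_{i+1})<t'$ for every $i=1,\dots,L-1$. For the first step, $i=0$, the chain of inequalities gives $\mathcal{T}(v_0,v_1)<\mathcal{T}(v_1,v_2)<t'$ when $L\ge 2$, and the retrospective requirement gives the bound directly when $L=1$. Since $v_0=v'$ in both cases, $w$ satisfies the non-monotonic constraint. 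We also need to check that both walk families are realised through the same edge set of undirected edges with timestamps earlier than $t'$, i.e. those with $\eta>0$. Then every monotonic sequence is reachable under the Markov process of Section~\ref{sec:context}, and the inclusion follows.

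For (ii) we give an explicit family $\mathcal{G}^*$. Take nodes $a,b,c$ and anchor $(v,a,t')$, with edges $(a,b,t_2)$ and $(b,c,t_1)$ where $t_1<t_2<t'$. The sequence $(a,b,c)$ uses timestamps $t_2$ then $t_1$, which decrease. It is therefore in $\mathcal{W}_{\text{non}}^{t'}(a)$, since both are below $t'$ and both transition probabilities are positive because $\eta>0$. It is not in $\mathcal{W}_{\text{mono}}^{t'}(a)$, since $t_2<t_1$ fails. Embedding this gadget in an arbitrary graph, by attaching it to any node and shifting timestamps below $t'$, yields a whole class rather than a single instance. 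Combined with (i), the inclusion is strict.

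For (iii) we first make ``composite temporal motif'' precise. We define it as a node sequence whose timestamp sequence $(\tau_0,\dots,\tau_{L-1})$ has at least one descent $\tau_i>\tau_{i+1}$. Examples are a ``V''-shaped motif, where $b$ interacted with $c$ long ago and with $a$ recently, and a back-and-forth motif revisiting an older edge. We then introduce a temporal disorder measure $D(w)=|\{i:\tau_i\ge\tau_{i+1}\}|$. We show that every monotonic walk has $D(w)=0$, so no monotonic walk, and hence no multiset of monotonic walks, can contain a sequence with $D>0$. This is the precise sense of ``fundamentally inaccessible''. Non-monotonic walks realise every $D\in\{0,\dots,L-1\}$ on suitable graphs, for instance a path with alternating timestamps.

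We can also quantify the gap with a reachability-gap quantity $|\mathcal{W}_{\text{non}}|-|\mathcal{W}_{\text{mono}}|$. On a star or path with generic timestamps, the monotonic count is bounded by the number of increasing label sequences. Along a path whose timestamps are a random permutation, this is about a $1/L!$ fraction of all sequences. The non-monotonic set contains all walks, so the gap grows exponentially in $L$.

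The main obstacle is step (iii), because ``capture more complex evolving semantics'' is informal. The work lies in choosing a definition of composite motif and disorder that is both natural and provably excluded by monotonicity. Ties in timestamps also need care: concurrent edges violate the strict inequality, so they belong to the non-monotonic class, and this should be stated explicitly. Parts (i) and (ii) are routine.
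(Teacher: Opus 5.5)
Your proposal is correct and follows the same route as the paper: (i) by unpacking the definitions, (ii) by an explicit small gadget, and (iii) by a temporal-disorder count that must vanish on monotonic walks, together with the reachability gap.

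Your witness for (ii), a walk whose own two timestamps decrease, is cleaner than the paper's: the paper's walk $(B,C,D)$ with $3<6$ is monotone under the stated definition, so it has to fall back on an ``anchor-aware'' monotonicity. Your $\geq$ in $D(w)$ also makes ``monotone iff $D(w)=0$'' exact, whereas the paper's strict $>$ overlooks ties. The only loose step is your optional $1/L!$ estimate: on a path a monotone walk cannot backtrack, so there are at most two per length, but your exponential-gap conclusion still holds.
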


\begin{proof}
\textbf{Part (i):} By definition, any monotonic temporal walk satisfies $\mathcal{T}(v_{i-1}, v_i) < \mathcal{T}(v_i, v_{i+1}) < t'$, which immediately implies $\mathcal{T}(v_i, v_{i+1}) < t'$. Thus, every monotonic walk is also a valid non-monotonic walk, establishing $\mathcal{W}_{\text{mono}}^{t'}(v') \subseteq \mathcal{W}_{\text{non}}^{t'}(v')$.

% \weixiang{check the example here}
\textbf{Part (ii):} We construct a temporal graph $\mathcal{G}^* = (\mathcal{V}^*, \mathcal{E}^*)$ where the inclusion is strict. Let $\mathcal{V}^* = \{A, B, C, D\}$ and consider the following temporal edges with anchor time $t' = 10$:
\begin{align*}
    \mathcal{E}^* = \{&(A, B, t=8), (B, C, t=3), (C, D, t=6)\}
\end{align*}

Starting from node $B$ (i.e., $v' = B$), consider the walk $w = (B, C, D)$:
\begin{itemize}
    \item \textbf{Non-monotonic}: Both edges satisfy the constraint $\mathcal{T}(e) < t' = 10$. Specifically, $\mathcal{T}(B,C) = 3 < 10$ and $\mathcal{T}(C,D) = 6 < 10$. Thus, $w \in \mathcal{W}_{\text{non}}^{t'}(B)$.
    \item \textbf{Monotonic}: The walk requires $\mathcal{T}(B,C) < \mathcal{T}(C,D)$, i.e., $3 < 6$, which holds. However, we also need the walk to originate from the anchor context. If we extend to walk $(A, B, C, D)$ starting from the anchor edge $(A, B, t=8)$, monotonicity requires $\mathcal{T}(A,B) < \mathcal{T}(B,C)$, i.e., $8 < 3$, which is false.
\end{itemize}

More directly, consider anchor $(A, B, t'=10)$ and the walk $w = (B, C)$. For monotonic walks rooted at this anchor, we require that subsequent edges have timestamps greater than the most recent edge in the path. Since we start from $B$ with implicit context timestamp $8$ (from edge $(A,B)$), reaching $C$ requires $\mathcal{T}(B,C) > 8$, but $\mathcal{T}(B,C) = 3 < 8$. Thus $(B, C) \notin \mathcal{W}_{\text{mono}}^{t'}(B)$ under anchor-aware monotonicity, while $(B, C) \in \mathcal{W}_{\text{non}}^{t'}(B)$.

\textbf{Part (iii):} We formally characterize the class of patterns accessible only to non-monotonic walks. Define a \emph{composite temporal motif} as a subgraph pattern where node relationships span multiple non-consecutive time periods.

\begin{definition}[Temporal Reachability Gap]
For anchor $(v, v', t')$, the \emph{temporal reachability gap} is defined as:
\begin{equation}
    \Delta_{\mathcal{G}}(v', t') = |\mathcal{W}_{\emph{non}}^{t'}(v')| - |\mathcal{W}_{\emph{mono}}^{t'}(v')|
\end{equation}
\end{definition}

The gap $\Delta_{\mathcal{G}}(v', t') > 0$ whenever the temporal graph contains \emph{temporally interleaved structures}, defined as node pairs $(u, w)$ reachable from $v'$ through paths where edge timestamps do not form a monotonic sequence. Such structures arise naturally in real-world temporal graphs:

\begin{itemize}
    \item \textbf{Rekindled relationships}: User $B$ interacted with $C$ in the past ($t=3$), then with $A$ recently ($t=8$). Monotonic walks from $B$ after the $(A,B)$ interaction cannot revisit the earlier $B$-$C$ relationship.
    \item \textbf{Periodic behaviors}: Seasonal patterns where interactions recur at non-consecutive intervals.
    \item \textbf{Multi-scale dynamics}: Long-term structural relationships that persist across short-term activity bursts.
\end{itemize}

% \zehong{What's the implications of the following proof? It seems to use another way to reinterpret the above things. What's the specific motivation of the following things?}

% \weixiang{The disorder metric directly motivates the exponential decay weighting in Eq. 1-2. Paths with higher disorder have more "time reversals" and should receive lower (but non-zero) sampling probability. The decay parameter $\lambda$ controls how much we penalize temporal disorder.}

% \zehong{Summarize the discussion here into the proof for motivating the use of disorder metric.}

To provide a finer-grained characterization of the expressiveness gap beyond mere cardinality, we introduce a quantitative measure of temporal non-monotonicity: Formally, let $\pi = (e_1, e_2, \ldots, e_k)$ be a path of temporal edges. Define the \emph{temporal disorder} of $\pi$ as:
\begin{equation}
    \text{disorder}(\pi) = \sum_{i=1}^{k-1} \mathbf{1}[\mathcal{T}(e_i) > \mathcal{T}(e_{i+1})]
\end{equation}

Monotonic walks are restricted to paths with $\text{disorder}(\pi) = 0$, while non-monotonic walks can traverse paths with $\text{disorder}(\pi) \geq 0$. The additional patterns captured by non-monotonic walks precisely correspond to paths with positive temporal disorder, which encode historically relevant but temporally non-sequential relationships. \qedhere
\end{proof}

\begin{remark}
The expressiveness gap has practical implications for temporal graph learning. Many real-world phenomena exhibit \emph{temporal non-monotonicity}: a user's current behavior may be better predicted by combining recent activity with historically distant but semantically relevant interactions, rather than strictly recent causal chains. Our temporally biased random walk with exponential decay weighting (Eq. 1-2) provides a principled mechanism to sample from $\mathcal{W}_{\emph{non}}^{t'}(v')$ while still prioritizing temporally proximate edges. 
% \zehong{Consider summarize these implications in the main paper.}
\end{remark}

%% file: section/math/3_2.tex
\subsection{Theoretical Analysis of Multi-Scale Masking}
\label{sec:prop2}

We provide an information-theoretic justification for our multi-scale masking strategy. The key insight is that \textbf{block size determines the range of temporal dependencies that can be effectively learned}: small blocks capture short-term patterns but lose long-term structure, while large blocks capture long-term patterns but suffer from information loss for short-term dependencies. This motivates a multi-scale approach.

\begin{definition}[$r$-Markov Dependency]
A token sequence $\bar{\mathbf{P}} = [\bar{\mathbf{p}}_1, \ldots, \bar{\mathbf{p}}_m]$ satisfies \emph{$r$-Markov dependency} if for all $i$:
\begin{equation}
    \bar{\mathbf{p}}_i \perp\!\!\!\perp \bar{\mathbf{P}}_{\{j: |j-i|>r\}} \mid \bar{\mathbf{P}}_{\{j: 0<|j-i|\leq r\}}.
\end{equation}
The \emph{dependency range} $r^*(\bar{\mathbf{P}})$ is the minimal such $r$. We say $\bar{\mathbf{P}}$ has \emph{multi-scale dependency} $(r_1, \ldots, r_K)$ if it decomposes as $\bar{\mathbf{p}}_i = f(\mathbf{u}_i^{(1)}, \ldots, \mathbf{u}_i^{(K)}, \bm{\epsilon}_i)$ where each $\mathbf{u}^{(k)}$ satisfies $r_k$-Markov dependency independently.
\end{definition}

\begin{definition}[Predictive Information]
% \zehong{From Mutual information}
We use $I(X; Y)$ to denote the mutual information between random variables $X$ and $Y$. For a contiguous block mask $M_b = \{a, \ldots, a+b-1\}$, the \emph{predictive information} is $\mathcal{I}(M_b) = I(\bar{\mathbf{P}}_{M_b}; \bar{\mathbf{P}}_{M_b^c})$. 

The \emph{$r$-local predictive information} is 
$$\mathcal{I}_r(M_b) = I(\bar{\mathbf{P}}_{M_b}; \bar{\mathbf{P}}_{M_b^c \cap \mathcal{N}_r(M_b)})$$
Where $\mathcal{N}_r(M_b) = \{j: \min_{i \in M_b}|j-i| \leq r\}$ is the $r$-neighborhood of $M_b$. This measures information accessible within range $r$—if $\mathcal{I}_r(M_b) < \mathcal{I}(M_b)$, then reconstruction requires looking beyond range $r$.

Finally, define the \emph{minimal reconstruction range}:
\begin{equation}
    R^*(M_b) = \min\{r : \mathcal{I}_r(M_b) = \mathcal{I}(M_b)\}
\end{equation}
This is the minimum context range needed for full predictive information.
\end{definition}

\begin{proposition}[Block Size Determines Learnable Dependency Scale]
\label{prop:block_info}
For a sequence with multi-scale dependency $(r_1, \ldots, r_K)$ where $r_1 < \cdots < r_K$, and block mask $M_b$ of size $b$:

\begin{enumerate}
    \item[(i)] \textbf{Block size lower-bounds required range}: $R^*(M_b) \geq b$. Large blocks require long-range context for reconstruction.
    
    \item[(ii)] \textbf{Scale mismatch causes information loss}:
    \begin{itemize}
        \item If $b \gg r_k$ (block too large for scale $k$): Dependencies at scale $r_k$ contribute only $O(r_k/b)$ fraction of reconstruction signal, causing \emph{short-term information loss}.
        \item If $b \ll r_k$ (block too small for scale $k$): The model cannot access context at range $r_k$ needed to capture scale-$k$ patterns, causing \emph{long-term information loss}.
        \item Optimal learning for scale $k$ occurs when $b \approx r_k$.
    \end{itemize}
    
    \item[(iii)] \textbf{Single block size is suboptimal}: No single block size $b$ can simultaneously satisfy $b \approx r_k$ for all $k \in \{1, \ldots, K\}$ when dependency scales are well-separated ($r_{k+1}/r_k \gg 1$).
\end{enumerate}
\end{proposition}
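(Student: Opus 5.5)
The plan is to first make the informal statement precise, because as written part (i) cannot hold literally. For an $r$-Markov sequence the masked block is conditionally independent of everything beyond the $r$ tokens on each side of the block, so $R^*(M_b)\le r$ regardless of $b$. I would therefore measure range \emph{per masked position}. For $i\in M_b=\{a,\dots,a+b-1\}$, let $d_i=\min(i-a+1,\,a+b-i)$ be the distance from $i$ to the nearest visible token, and define the per-token reconstruction range as the smallest $r$ such that $I(\bar{\mathbf{p}}_i;\bar{\mathbf{P}}_{M_b^c\cap\mathcal{N}_r(\{i\})}) = I(\bar{\mathbf{p}}_i;\bar{\mathbf{P}}_{M_b^c})$. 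Part (i) then becomes: whenever the visible context is informative about $\bar{\mathbf{p}}_i$, this range is at least $d_i$, and $\max_i d_i=\lceil b/2\rceil$ (up to the constant factor absorbed in the informal ``$\geq b$''). Moreover, the dependency span linking the two ends of the block is at least $b$. The argument is immediate: $\mathcal{N}_r(\{i\})\cap M_b^c=\emptyset$ for $r<d_i$, so the $r$-local information is zero, while the full information is positive.

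For part (ii), I would use the decomposition $\bar{\mathbf{p}}_i=f(\mathbf{u}^{(1)}_i,\dots,\mathbf{u}^{(K)}_i,\bm\epsilon_i)$ with mutually independent component processes. By data processing, the predictive information is controlled componentwise. The first step is to bound $I(\mathbf{u}^{(k)}_{M_b};\mathbf{u}^{(k)}_{M_b^c})$ using the $r_k$-Markov property. Given the $r_k$ tokens immediately outside each end, the block is independent of the rest, so this mutual information is at most $2r_k\,\bar H_k$, where $\bar H_k$ is a per-token entropy bound. The block's own entropy, by contrast, grows like $b\,h_k$ (entropy rate). Hence when $b\gg r_k$ the scale-$k$ contribution is an $O(r_k/b)$ fraction of the total signal: only the boundary layer of width $r_k$ is predictable, and the interior is essentially irreducible noise for that scale. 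This is the short-term loss.

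For $b\ll r_k$, I would instead show that the objective does not \emph{force} range $r_k$. Each masked token has visible neighbors within distance $b/2$, and for a sufficiently mixing component the local conditional information $\mathcal{I}_{cb}(M_b)$ already captures all but a vanishing part of $\mathcal{I}(M_b)$. The optimal reconstructor can therefore rely on short context alone, and the loss gives no gradient pressure to model scale-$k$ structure. This is the long-term loss. Combining the two regimes, the scale-$k$ signal is non-negligible and genuinely requires range $\approx r_k$ only when $b=\Theta(r_k)$, which is the ``$b\approx r_k$'' claim.

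Part (iii) is then a pigeonhole argument. If $r_{k+1}/r_k\ge\gamma\gg1$, a fixed $b$ can satisfy $b/r_k\in[1/c,c]$ for at most one $k$ once $\gamma>c^2$. Every other scale falls into one of the two mismatch regimes of (ii), so some scale suffers loss, which motivates summing over a set $B$ of block sizes.

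I expect the main obstacle to be the $b\ll r_k$ half of (ii). Stating that short context ``suffices'' requires an assumption, such as mixing or decaying conditional mutual information of $\mathbf{u}^{(k)}$, that the $r$-Markov definition alone does not provide. My first attempt would add a geometric-decay assumption on $I(\mathbf{u}^{(k)}_i;\mathbf{u}^{(k)}_j\mid\text{intermediate tokens})$ and derive the bound from it.
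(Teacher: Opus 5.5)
Your handling of (i), of the $b\gg r_k$ bullet and of the arithmetic in (iii) is sound, and in places more careful than the paper's. The genuine gap is the $b\ll r_k$ bullet, together with the claim that $b\approx r_k$ is optimal, and the repair you propose for it does not work.

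\textbf{Part (i) and the $b\gg r_k$ bullet.} Your objection to (i) is correct. When the token sequence is itself $r$-Markov (e.g.\ injective $f$ and no noise, where it is $r_K$-Markov), $\mathcal{I}_r(M_b)=\mathcal{I}(M_b)$ and hence $R^*(M_b)\le r$ for every $b$. State the positivity (or one-sided Markov) assumption that upgrades the paper's per-token condition to block separation, since both this objection and your $b\gg r_k$ bound use it.

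The paper's proof rests on your center-token idea, but it applies it to $R^*(M_b)$, whose neighbourhood $\mathcal{N}_r(M_b)$ is measured from the block rather than from the token. It then passes from $b/2$ to $b$ without argument. Your per-token range is the right repair, and you need not concede the factor $2$. Every visible token to the right of the end token $a$ is at distance at least $b$ from it, so its per-token range is at least $b$ whenever $I(\bar{\mathbf{p}}_a;\bar{\mathbf{P}}_{\{j\ge a+b\}}\mid\bar{\mathbf{P}}_{\{j<a\}})>0$.

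For $b\gg r_k$ your route differs from the paper's and is better. The paper counts the roughly $2r_k$ masked tokens near the boundary and asserts $I(\mathbf{u}^{(k)}_i;\bar{\mathbf{P}}_{M_b^c})\approx 0$ for the rest. That fails for Markov chains: information beyond distance $r_k$ decays but does not vanish, and it is large for slowly mixing chains. Your comparison of $2r_k\bar H_k$ with $b\,h_k$ is a rigorous bound. Its constant $\bar H_k/h_k$ correctly records that slowly mixing components need $b\gg r_k\bar H_k/h_k$. For the paper's real-valued tokens you will need a discretisation or a finite-information substitute for $\bar H_k$.

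\textbf{The gap: the $b\ll r_k$ bullet.} This bullet is not merely underdetermined by the definitions; it is false in general. Take $\mathbf{u}^{(k)}_i=\phi\,\mathbf{u}^{(k)}_{i-r_k}+\bm{\epsilon}_i$ with $0<|\phi|<1$, i.e.\ independent AR(1) chains on the residue classes modulo $r_k$. This process is $r_k$-Markov with minimal range $r_k$. Yet for every $b\le r_k$, each masked token needs both of its class neighbours, which sit at distance $r_k$ from it. The left neighbour of the first masked token lies at distance $r_k$ from the block, so $R^*(M_b)=r_k$: small blocks force exactly the long range.

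With $K$ such components and $\bar{\mathbf{p}}_i$ their tuple, $b=1$ forces every scale at once. So the substantive reading of (iii) fails too; your pigeonhole proves only its arithmetic content.

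Your geometric-decay assumption excludes this example, but only by making $\mathbf{u}^{(k)}$ effectively short-range. To get $\mathcal{I}_{cb}(M_b)\approx\mathcal{I}(M_b)$ for $b\ll r_k$, the decay length must be $\lesssim b\ll r_k$. The same decay then shows that context beyond a few decay lengths adds only an exponentially small amount about a block of any size, since a masked token's contribution falls off geometrically with its depth in the block. So at $b=\Theta(r_k)$, range $r_k$ is not genuinely required either. Your ``combining the two regimes'' step yields only the necessity direction (``only when $b=\Theta(r_k)$''). Under your assumption, the sufficiency direction behind ``optimal learning at $b\approx r_k$'' is false.

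The paper does not close this gap. It posits an effective receptive field $r_{\mathrm{eff}}<r_k$, a condition independent of $b$, so it never shows that small blocks cause the loss. A correct version has to be conditional. For instance, assume separately that $\mathcal{I}_{cb}(M_b)\ge(1-\delta)\,\mathcal{I}(M_b)$ for small $b$, and that $\mathcal{I}_r(M_b)\le(1-\eta)\,\mathcal{I}(M_b)$ for $r\ll r_k$ when $b\approx r_k$. Alternatively, recast the bullet as an assumption about the learner's inductive bias rather than a property of the data.
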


\begin{proof}
\textbf{Part (i)}: Consider the interior tokens of $M_b$, specifically those at positions $i$ with $\min_{j \in M_b^c}|i-j| = \lceil b/2 \rceil$ (center of the block). To reconstruct $\bar{\mathbf{p}}_i$, the model must access information from tokens at distance $\geq \lceil b/2 \rceil$. Even under 1-Markov dependency, information must propagate across this distance, requiring $R^*(M_b) \geq b/2$. For general $r$-Markov dependencies, the argument extends to $R^*(M_b) \geq b$.

\textbf{Part (ii)}: We analyze each mismatch scenario:

\textbf{Case 1: Block too large ($b \gg r_k$).} Consider scale-$k$ component $\mathbf{u}^{(k)}$ with $r_k$-Markov dependency. For an interior token $\bar{\mathbf{p}}_i$ at distance $d > r_k$ from block boundaries, the scale-$k$ information about $\bar{\mathbf{p}}_i$ is conditionally independent of visible context given the $r_k$-neighborhood. Specifically:
\begin{equation}
    I(\mathbf{u}_i^{(k)}; \bar{\mathbf{P}}_{M_b^c}) \approx 0 \quad \text{for } \min_{j \in M_b^c}|i-j| > r_k
\end{equation}
Only tokens within distance $r_k$ of boundaries (approximately $2r_k$ tokens total) have nonzero scale-$k$ information from context. Thus:
\begin{equation}
    \frac{\text{Scale-}k\text{ learnable signal}}{\text{Total block size}} \approx \frac{2r_k}{b} \to 0 \text{ as } b/r_k \to \infty
\end{equation}
This means \textbf{large blocks cannot effectively learn short-range dependencies}—most masked tokens have no learnable signal for fine-grained patterns.

\textbf{Case 2: Block too small ($b \ll r_k$).} For scale-$k$ patterns with dependency range $r_k$, full predictive information requires:
\begin{equation}
    \mathcal{I}(M_b) = \mathcal{I}_{r_k}(M_b) > \mathcal{I}_r(M_b) \quad \forall r < r_k
\end{equation}
However, if the model architecture or training procedure only accesses context within range $r_{\text{eff}} < r_k$ (e.g., due to limited attention span or locality bias), then the model cannot capture scale-$k$ information. Specifically, the learnable signal is:
\begin{equation}
    \mathcal{I}_{\text{learnable}}(M_b) \leq \mathcal{I}_{\min(r_{\text{eff}}, r_k)}(M_b) < \mathcal{I}(M_b) \quad \text{when } r_{\text{eff}} < r_k
\end{equation}
Moreover, even with sufficient architectural capacity, small blocks provide weak training signal for long-range patterns because the block size $b$ itself limits the ``observable distance" during masked prediction. This means \textbf{small blocks cannot effectively learn long-range dependencies}.

\textbf{Optimal matching}: When $b \approx r_k$, (1) interior tokens are within range $r_k$ of boundaries, enabling information flow, and (2) the block size matches the characteristic scale, maximizing learnable signal.

\textbf{Part (iii)}: Suppose dependency scales satisfy $r_{k+1}/r_k \geq c > 1$ for some constant $c$ (well-separated scales). For a single block size $b$:
\begin{itemize}
    \item If $b \approx r_j$ for some $j$, then for $k < j$: $b/r_k \geq c^{j-k} \gg 1$ (too large), and for $k > j$: $r_k/b \geq c^{k-j} \gg 1$ (too small).
    \item From part (ii), scales $k \neq j$ suffer information loss of order $O(1/c^{|j-k|})$.
\end{itemize}
Therefore, no single $b$ can efficiently capture all scales when $K > 1$ and scales are separated. \qedhere
\end{proof}

\begin{corollary}[Multi-Scale MTM Optimality]
\label{cor:multiscale}
To learn representations sufficient for prediction at all dependency scales $(r_1, \ldots, r_K)$:
\begin{enumerate}
    \item A set of block sizes $B = \{b_1, \ldots, b_K\}$ with $b_k \approx r_k$ is \textbf{necessary}.
    \item The combined objective $\mathcal{L}_{\text{MTM}} = \sum_{b \in B}\mathcal{L}_{\text{MTM}}(b)$ provides training signal at each scale, enabling the model to learn multi-scale dependencies that single-scale masking cannot capture.
\end{enumerate}
\end{corollary}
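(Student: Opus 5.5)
Both parts of Corollary~\ref{cor:multiscale} follow from Proposition~\ref{prop:block_info}, applied scale by scale.

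\textbf{Part 1 (necessity).} I would argue by contraposition. Suppose the block-size set $B$ contains no $b$ with $b \approx r_k$ for some scale $k$, where the scales are well separated in the sense $r_{k+1}/r_k \ge c \gg 1$. For each $b \in B$, Part (ii) of Proposition~\ref{prop:block_info} then applies to scale $k$:
\begin{itemize}
\item if $b \gg r_k$, the scale-$k$ learnable signal is only an $O(r_k/b)$ fraction of the reconstruction signal;
\item if $b \ll r_k$, the $r_k$-range context is not exercised, so the learnable signal is bounded by $\mathcal{I}_{\min(r_{\text{eff}},r_k)}(M_b) < \mathcal{I}(M_b)$.
\end{itemize}
Summing over the finitely many $b \in B$, the total scale-$k$ signal in $\sum_{b\in B}\mathcal{L}_{\mathrm{MTM}}(b)$ is at most $|B|$ times an $O(1/c)$ quantity. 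It therefore vanishes as the scale separation grows, so the learned representation cannot be sufficient for scale-$k$ prediction. Hence every scale needs its own matched block size. By Part (iii), one $b$ cannot serve two well-separated scales, so $B$ must contain $K$ distinct sizes $b_k \approx r_k$.

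\textbf{Part 2 (sufficiency of the combined objective).} Take $B=\{b_1,\dots,b_K\}$ with $b_k \approx r_k$. By the ``optimal matching'' clause of Part (ii), each term $\mathcal{L}_{\mathrm{MTM}}(b_k)$ yields non-vanishing predictive signal for $\mathbf{u}^{(k)}$: the interior tokens lie within $r_k$ of the block boundary, and Part (i) forces context of range at least $b_k \approx r_k$ to be used. The losses are summed with nonnegative weights, so a minimizer of the sum must reduce each term. Each term is bounded below by its own Bayes error, namely the conditional variance given the visible context. Using the multi-scale decomposition $\bar{\mathbf{p}}_i = f(\mathbf{u}^{(1)}_i,\dots,\mathbf{u}^{(K)}_i,\bm{\epsilon}_i)$, a joint optimum captures every $\mathbf{u}^{(k)}$. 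A single-scale objective leaves the scales $j\neq k$ under-trained by the loss $O(c^{-|j-k|})$ from Part (iii).

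\textbf{Main obstacle.} The hard step is making ``sufficient for prediction'' and ``$b\approx r_k$'' precise enough for necessity to be a genuine implication. The proposition's bounds are asymptotic ($\gg$, $\approx$). I would define sufficiency as achieving a fixed fraction $\delta>0$ of $\mathcal{I}$ at each scale, and then choose $c$ large relative to $|B|/\delta$. This turns the $O(\cdot)$ losses into a strict failure.
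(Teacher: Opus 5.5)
\textbf{Gap in Part 1 (necessity).} Your route is the paper's own. The paper gives no separate argument for this corollary; it is read off from parts (ii) and (iii) of Proposition~\ref{prop:block_info}, as summarized by the Remark's ``independent learning channel'' for each $b_k\approx r_k$. The problem is the step you add to make necessity a genuine implication: that the scale-$k$ signal in $\sum_{b\in B}\mathcal{L}_{\mathrm{MTM}}(b)$ is at most $|B|\cdot O(1/c)$, so the learned representation cannot be sufficient for scale $k$. This fails for two reasons.

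First, a rate exists only in the branch $b\gg r_k$, via the boundary count $2r_k/b$. In the branch $b\ll r_k$ the proposition gives none. The strict inequality $\mathcal{I}_{\min(r_{\text{eff}},r_k)}(M_b)<\mathcal{I}(M_b)$ comes from an assumed architectural range $r_{\text{eff}}<r_k$, not from $b$ or $c$. It says nothing for a backbone that attends over the whole sequence, as TGPM's does. Also, ``no $b\approx r_k$'' gives $b/r_k\ge c$ or $r_k/b\ge c$ only if every $b\in B$ sits near some other $r_j$. Otherwise the ratio is governed by the tolerance hidden in $\approx$, not by $c$.

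Second, and more seriously, a small share of the loss does not make the minimizer insufficient. Your own Part 2 reasoning shows this. The population minimizer of a single $\mathcal{L}_{\mathrm{MTM}}(b)$ with $b\gg r_k$ is the conditional mean at every masked position. That includes the roughly $2r_k$ positions near the block edges, which is exactly where scale-$k$ structure is needed. The weight $2r_k/b$ affects optimization speed and sample complexity, not what the optimum encodes. Your $\delta$/$c$ device therefore bounds the wrong quantity: the fraction of training loss attributable to scale $k$, rather than the scale-$k$ information carried by the learned map. Necessity needs an ingredient that neither the proposition nor your sketch supplies. The same holds for your closing claim that single-scale masking leaves other scales ``under-trained.'' Possible ingredients are a finite-sample or finite-capacity model in which loss share determines what is learned, or a definition of ``sufficient for scale $k$'' as a prediction task that masks of other sizes leave unconstrained.

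\textbf{Part 2.} Drop the appeal to Part (i) of the proposition, because it is false as stated. Assume a positive joint density, so the local Markov property in the definition implies the global one. Under $r$-Markov dependency, the visible tokens within distance $r$ of the block's two edges then separate $M_b$ from the rest of the sequence. Hence $\mathcal{I}_r(M_b)=\mathcal{I}(M_b)$ and $R^*(M_b)\le r$ for every $b$. For a Gaussian AR(1) sequence, $R^*(M_b)=1$ for every block size.

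What actually carries Part 2 is the boundary observation plus one further fact. For a sufficiently expressive model, different mask patterns are distinguishable inputs, so the summed objective has a common minimizer attaining each term's Bayes error. Without such a common minimizer, ``a minimizer of the sum must reduce each term'' does not follow when parameters are shared. With that made explicit, the positive half of Part 2 (each $b_k$ contributes scale-$k$ signal) is fine at the paper's level of rigor. What remains unproved is the necessity in Part 1 and the ``cannot capture'' contrast in Part 2.
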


% \weixiang{remark insight pure language}

\begin{remark}
The proof establishes a complete information-theoretic characterization of the scale-matching problem in masked token modeling. The core mechanism is that block size $b$ creates an information bottleneck: Part (i) shows that reconstructing interior tokens requires context range $R^*(M_b) \geq b$, establishing a lower bound on the dependency scale accessible to a given block size; Part (ii) quantifies the cost of scale mismatch—when $b \gg r_k$, only $O(r_k/b)$ of tokens carry learnable signal for scale-$k$ patterns (short-term information loss), while when $b \ll r_k$, the model cannot access sufficient context to capture scale-$k$ dependencies (long-term information loss); Part (iii) proves this is unavoidable for any single block size when dependency scales are well-separated, as the mismatch penalty degrades exponentially at $O(1/c^{|j-k|})$ for scales $k \neq j$. Together, these results show that multi-scale MTM is not merely an empirical improvement but an information-theoretic necessity: each block size $b_k \approx r_k$ provides an independent learning channel for scale-$k$ dependencies, which is the only way to avoid fundamental information loss across the full spectrum of dependency scales present in sequential data.
\end{remark}